\documentclass{article}

\usepackage{wrapfig}
\usepackage{float}
\usepackage[preprint]{neurips_2026}

\usepackage[utf8]{inputenc} 
\usepackage[T1]{fontenc}    
\usepackage{hyperref}       
\usepackage{url}            
\usepackage{booktabs}       
\usepackage{amsfonts}       
\usepackage{nicefrac}       
\usepackage{microtype}      
\usepackage{xcolor}         
\usepackage{enumitem}

\usepackage{amsmath}
\usepackage{subcaption}
\usepackage{algorithm}
\usepackage{algpseudocode}
\usepackage{xcolor}
\usepackage{booktabs}
\usepackage{colortbl}
\usepackage{multirow}
\usepackage{amsthm}
\usepackage{graphicx}
\usepackage{amssymb}  
\newtheorem{theorem}{Theorem}[section]
\newtheorem{lemma}[theorem]{Lemma}

\newtheorem{assumption}{Assumption}
\theoremstyle{definition}

\newtheorem{proposition}[theorem]{Proposition}
\theoremstyle{remark}
\newtheorem{remark}[theorem]{Remark}
\definecolor{param}{RGB}{244,204,204}
\definecolor{priv}{RGB}{252,229,205}
\definecolor{ours}{RGB}{217,234,211}

\title{Learned Neighbor Trust for Collaborative Deployment in Model-Agnostic Decentralized Learning}

\author{Michael Lanier, Luise Ge, Sastry Kompella,  Yevgeniy Vorobeychik \\
Department of Computer Science \\
Washington University in St. Louis and Nexcepta \\
lanier.m@wustl.edu, g.luise@wustl.edu, skompella@nexcepta.com, yvorobeychik@wustl.edu}

\begin{document}
\maketitle

\begin{abstract}
Many decentralized distillation methods are designed around training-time coordination,
yet deploy each node in isolation even when more capable neighbors remain available
at inference time. This is an incomplete objective for settings such as IoT, where
devices are heterogeneous, data is scarce and skewed, and a node's strongest
neighbors may far exceed its own local capacity. We study how nodes should train
so that their predictions compose well at deployment, and how each node should
learn whom to trust. Under a server-free, model-agnostic protocol where
nodes exchange only queries and soft predictions, we propose Learned Neighbor
Trust ($\bf{LNTrust}$) wherein each node learns a compact trust function over its neighborhood
from local validation evidence. This trust function gates auxiliary distillation
during training and defines a deployment ensemble at inference, so that
collaboration learned during training transfers directly to deployment. Across
datasets and topologies, LNTrust improves deployed accuracy over the strongest
output-only baseline by large margins while using significantly less communication
than previous methods.
\end{abstract}

\section{Introduction}
\label{sec:intro}

Decentralized learning is conventionally framed as a training-time
coordination
problem~\cite{mcmahan2017communication,lian2017can,kairouz2021advances,fallah2020personalized,li2021ditto,t2020personalized,marfoq2022knnper}:
nodes exchange parameters, gradients, or predictions to improve shared
or personalized models, on the assumption that a node's ceiling is set
by how well it converges. In many realistic deployments, such as edge~\cite{cisco_edge2024} and fog computing~\cite{cisco_fog2015}, this does not
hold: data are scarce, skewed, and many of the nodes (e.g., edge devices) can only deploy weaker architectures and can have limited communication.
On the other hand, pure inference-time ensembling of independently trained
models~\cite{seo2026federated} is bounded by limited local model capacity and communication constraints. 
Nonetheless, it is often the case that most devices have network neighbors that can deploy significantly larger local models, and can collect and store more data.
It is natural in such environments to develop schemes in which nodes can optimally leverage availability of high-quality neighbors, when available, for both training and inference.
We therefore ask: \emph{how should nodes
train so that their predictions compose well at deployment, and how
should each node learn to combine the predictions it receives?}

We study this under \emph{decentralized black-box peer querying}: nodes
query peers for predictions but never observe peer parameters,
gradients, or raw labels. 
This avoids strong assumptions such as identical model architectures for all nodes (an unreasonable requirement in decentralized settings), significantly reduces communication requirements, and does not require any knowledge about what models are being trained by individual nodes (a setting we refer to as \emph{model agnostic}).
As a consequence, exact joint optimization is not implementable,
because the cross-node supervised information it would require is never
available in one place. 
Instead, we make use of information 
accessible from local
evidence, learning deployment weights from local node-specific validation data, and
use training-time collaboration as an auxiliary signal anchored to each
node's supervised loss.

Specifically, we propose a \textbf{Learned Neighbor Trust} (LNTrust) approach. In LNTrust, each node fits a
compact trust model over its neighbors from relational statistics
(validation performance, prediction agreement, distributional overlap,
specialization) computed from query responses on local data. The
learned weights drive both a validated closed-neighborhood ensemble at
deployment and a gated auxiliary teacher during training, so that
collaboration helps without displacing local supervision.
A key advantage of our approach is its highly effective use of limited communication bandwidth, achieving an especially notable advantage over prior art in terms of efficacy per unit communication (Figure~\ref{fig:side_by_side}(a)).
In Figure~\ref{fig:side_by_side}(b), we visually illustrate that the approach successfully converges to a weight distribution that favors most useful neighbors, ignoring others either because their data is dissimilar from local, or because the models are too shallow.

\begin{figure}[t]
\centering
\begin{subfigure}[t]{0.48\textwidth}
  \centering
  \includegraphics[width=\linewidth]{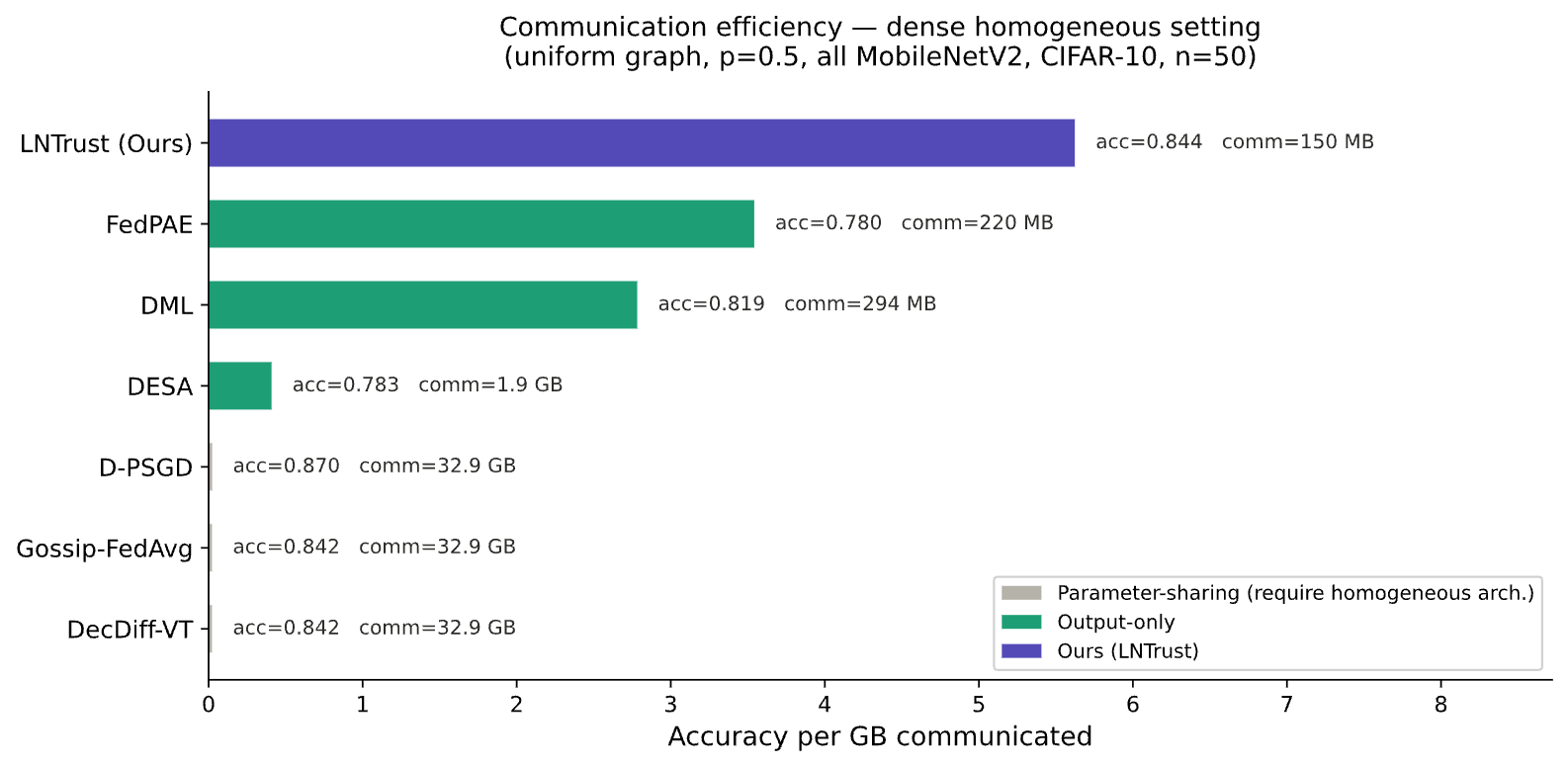}
  \caption{Communication efficiency, dense homogeneous setting.}
  \label{fig:comm_efficiency}
\end{subfigure}\hfill
\begin{subfigure}[t]{0.48\textwidth}
  \centering
  \includegraphics[width=\linewidth]{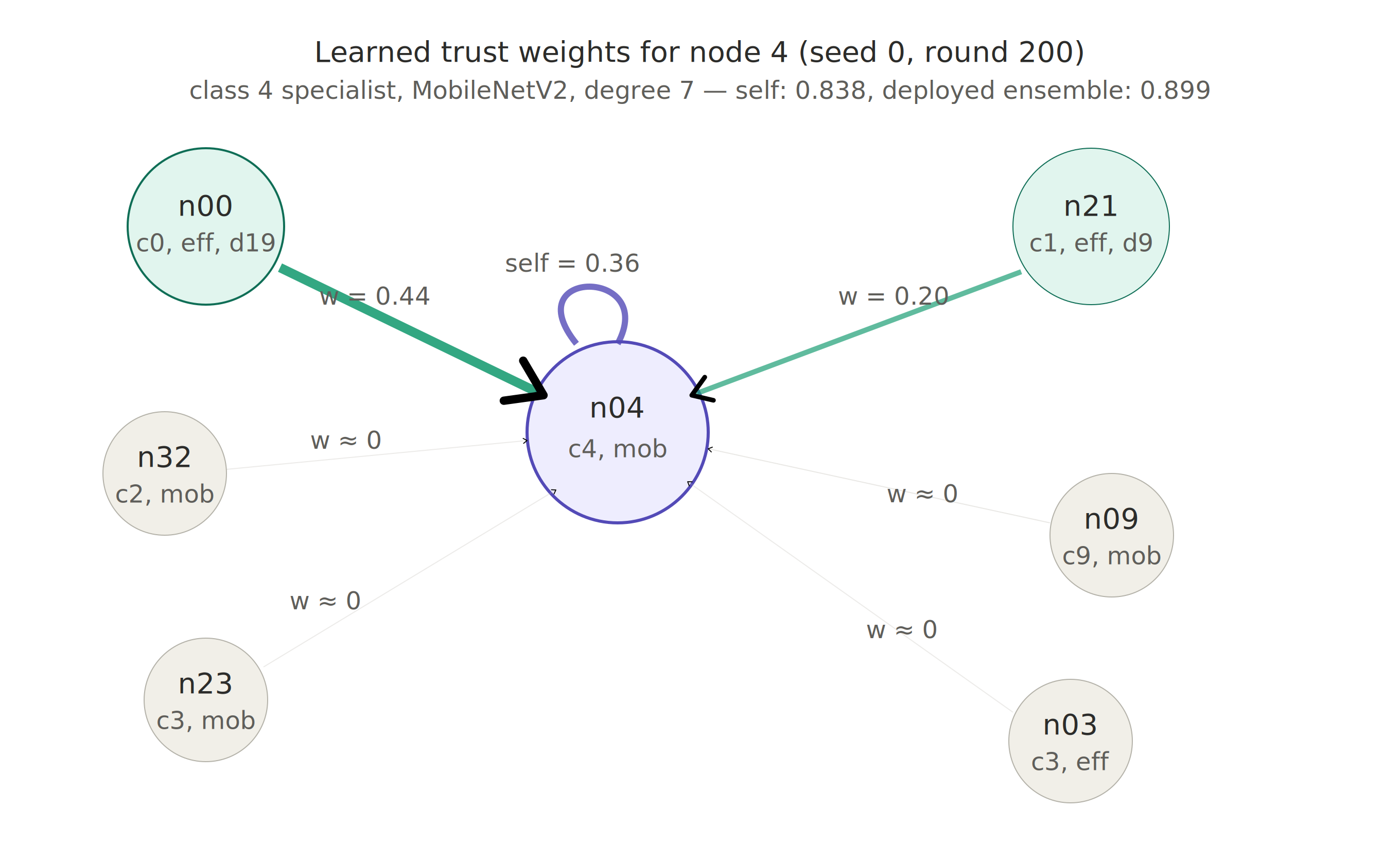}
  \caption{Deployment-time trust weights for node~4.}
  \label{fig:trust_example}
\end{subfigure}
\caption{%
  \textbf{Left:} Communication efficiency in the dense homogeneous
  setting (uniform graph, $p{=}0.5$, all MobileNetV2, CIFAR-10,
  $n{=}50$, 3 seeds). Bars show accuracy per GB communicated;
  annotations give absolute accuracy and per-node communication.
  Gray: parameter-sharing methods (require homogeneous
  architectures). Accuracy values match
  Table~\ref{tab:dense_homo}; this is a different regime from
  Table~\ref{tab:agnostic_skewed}.
  \textbf{Right:} Deployment-time trust weights for node~4
  (MobileNetV2, degree~7) in the sparse heterogeneous setting
  (seed~0, round~200); arrow thickness/color encode weight. The
  trust model concentrates weight on two EfficientNet hubs
  (nodes~0, 21) while suppressing four neighbors including another
  EfficientNet node, yielding $0.899$ test accuracy vs.\ $0.838$
  self-only.
}
\label{fig:side_by_side}
\end{figure}

\smallskip
\noindent\textbf{Summary of Contributions.}
\begin{itemize}[leftmargin=15pt,topsep=0pt,itemsep=0pt]
\item \textbf{Algorithm.} We develop LNTrust, a decentralized black-box peer
querying approach for node-specific collaboration. 
Each node decides
which neighbors to trust for its target distribution without
sharing parameters, gradients, or labels to drive both trust-gated auxiliary distillation during training and ensembling at deployment.
We demonstrate its effectiveness on
CIFAR-10, CIFAR-100, and EuroSAT.

\item \textbf{Theory.} We prove (i) a finite-sample deployment bound
showing validation-fit weights pay only
$\mathcal{O}\!\bigl(\varepsilon_{\mathrm{dep}}^{-1}\sqrt{\log|\bar{\mathcal{N}}(i)|/m_i}\bigr)$
excess log-loss over the best weight distribution, and (ii) a
controlled-perturbation bound showing LNTrust cannot drift far from
supervised training.

\item \textbf{Artifact.} We release a unified decentralized-learning
benchmarking harness that implements LNTrust alongside all evaluated
baselines (DML, DESA, FedPAE, Mean Teacher, D-PSGD, Gossip-FedAvg,
DecDiff-VT) under a common communication-accounted protocol, together
with the three graph configurations used in
this work.\footnote{https://github.com/Lan131/LnTrust}
\end{itemize}

\section{Related Work}

\noindent\textbf{Decentralized learning.}
Parameter and gradient sharing~\cite{mcmahan2017communication,li2020fedprox,
karimireddy2020scaffold,li2021moon} remains the dominant paradigm for
collaborative training without direct data pooling. FedAvg and successor
methods approximate joint optimization under a shared parameter space,
addressing non-IID instabilities through proximal regularization,
client-drift correction, and model-contrastive objectives. Fully decentralized
variants replace the central coordinator with gossip-style weight mixing:
D-PSGD~\cite{lian2017can} matches centralized convergence rates under
spectral conditions, and later work extends gossip to heterogeneous data
and topologies~\cite{hegedus2021gossip,yuan2023decdiff,ravikumar2024homogenizing}.
All of these methods exchange high-dimensional model state each round and
presuppose a common architecture across participants, a constraint LNTrust's
output-only protocol removes entirely.

\noindent\textbf{Output-only knowledge transfer.}
Output-only transfer replaces parameter exchange with soft-label communication and local distillation.
Server-mediated federated variants~\cite{li2019fedmd,lin2020ensemble,itahara2021distillation}
retain a central aggregator we do not assume. Fully decentralized instances
(DML~\cite{Zhang}, DeSA~\cite{huang2024overcoming}, IDKD~\cite{ravikumar2024homogenizing})
use peer predictions during training but deploy each model in isolation, so the
collaboration protocol and the deployment policy are designed independently. LNTrust
 reuses the same learned trust signal at both stages: the estimator that gates
distillation during training also drives combination at inference.

\noindent\textbf{Personalization in federated learning.}
A parallel line of work addresses the reality that a single global model rarely
suits all clients under heterogeneous data. Bi-level and proximal formulations bake
personalization into training through a regularizer that balances local fit against
a shared anchor, as in pFedMe~\cite{t2020personalized} and Ditto~\cite{li2021ditto}.
These methods still operate through parameter exchange, assume architectural
homogeneity, and incur additional optimization work per round. 

\noindent\textbf{Semi-supervised learning with teacher predictions.}
The use of teacher predictions on unlabeled data has a long history in
semi-supervised learning, most directly through consistency-target methods such as
Mean Teacher~\cite{TarvainenV17}. These approaches rely on a single internal teacher
(typically an EMA of the student itself) to generate soft targets on unlabeled
samples. LNTrust generalizes this idea to a decentralized multi-teacher setting
where candidate teachers are external peers with unknown data and unknown
architectures.

\noindent\textbf{Client-specific weighting at inference time.}
Inference-time ensembling of client predictions is itself not new. Server-based
methods aggregate client outputs
centrally~\cite{chen2021fedbe,lin2020ensemble,allouah2024fens,hamer2020fedboost};
personalization methods such as FedTHE~\cite{jiang2023fedthe}, ATP~\cite{bao2023atp},
and knn-Per~\cite{marfoq2022knnper} combine local and shared predictors under
various assumptions on parameter or representation sharing. Closest in spirit are
FedFomo~\cite{zhang2021fedfomo}, which uses local validation to weight peer models,
and FedPAE~\cite{mueller2024fedpae}, which selects peer ensembles under model
heterogeneity. Both download or inspect peer models; LNTrust combines peers through
black-box queries alone, and learns a trust function that drives both training-time
distillation and deployment-time combination from the same compact set of
relational features.

\section{Problem Setup}
\label{sec:setup}

Let $\mathcal{V}=\{1,\dots,n\}$ be the nodes of an undirected graph
$G=(\mathcal{V},\mathcal{E})$, each aiming to solve a $C$-class classification problem over the same input space $\mathcal{X} \subseteq \mathbb{R}^d$.
If $(i,j)\in\mathcal{E}$, nodes $i$ and $j$ can exchange messages.
We write $\mathcal{N}(i)$ for the open neighborhood of node $i$ and
$\bar{\mathcal{N}}(i)=\{i\}\cup\mathcal{N}(i)$ for its closed neighborhood.
Each node $i$ holds a labeled set
$\mathcal{D}_i^L=\{(x_k,y_k)\}$ drawn from a local distribution
$\mathcal{P}_i$ and an unlabeled shard $\mathcal{D}_i^U$ drawn from a
shared class-balanced pool.
Each node maintains a local predictor
$h_{\theta_i}:\mathbb{R}^d\rightarrow\mathbb{R}^C$ and associated class
probabilities $p_i(x)=\mathrm{softmax}(h_{\theta_i}(x)) \in\Delta^{C-1}$, with $p_j^y(x)$ the probability of $j$ predicting label $y$ for an input $x$.
We define $\theta = (\theta_1,\ldots,\theta_n)$.
Communication is \emph{output-only}: node $i$ sends unlabeled inputs to neighbor $j$, and $j$ returns soft predictions $p_j(x)$.
Ground-truth labels, model weights, gradients, and intermediate
activations are never transmitted.
This makes the protocol server-free and architecture-agnostic, but it
also means that node $i$ cannot directly optimize neighbor $j$'s model
using its private labels.
At deployment, node $i$ uses a node-specific ensemble over its closed neighborhood for inference. 

Our goal is to minimize the global
deployment objective
\begin{equation}
\mathcal F(\theta,\alpha)
\equiv
\sum_{i\in\mathcal V}
\mathbb E_{(x,y)\sim\mathcal{P}_i}
\bigl[\ell(q_i(y\mid x;\alpha_i,\theta))\bigr],\ \
\mathrm{where}\ \
q_i(y\mid x;\alpha_i,\theta)
=
\sum_{j\in\bar{\mathcal N}(i)} \alpha_{ij}\, p_j^y(x),
\label{eq:deployment-objective}
\end{equation}
subject to $\alpha_i \in \Delta(\bar{\mathcal N}(i))$ for every node $i$, for some loss function $\ell(\cdot)$.

\section{Solution Approach}
\label{sec:method}

If we had centralized access to all node-level models, we could optimize $\mathcal F(\theta,\alpha)$ using alternating minimization: alternating between minimizing weighted average loss given fixed weights, and optimizing the weights in isolation.
However, our goal is to solve this problem in an entirely decentralized fashion, with individual nodes  optimizing their respective $h_{\theta_i}$ locally.
In addition, when nodes have particularly small local datasets, the free simplex fit is high
variance and often over-specializes to a few held-out examples.
Consequently, an alternative approach is warranted.
To this end, we propose a two-stage training protocol described next.

\subsection{Two-Stage Training Protocol}
\label{sec:twostage}

Training proceeds in two stages.
In Stage~1, each node trains locally without communication using the
supervised objective
\begin{equation*}
\mathcal{L}_i^{\mathrm{sup}}(\theta_i)
\equiv
\sum_{(x,y)\in\mathcal{D}_i^L}
\!\left[\ell\!\left(h_{\theta_i}(x),y\right)\right]
+ \frac{\lambda}{2}\|\theta_i\|^2.
\end{equation*}
After $T_1$ rounds, node $i$ constructs a held-out validation set
$\mathcal{D}_i^V$\footnote{This must be carefully done to ensure that $\mathcal{D}_i^V$ is really held out. See section 5 for details.} matched to its local class distribution. This set is
used only to estimate neighbor quality and to fit the trust model; the
corresponding examples are excluded from Stage~2 supervised updates, so
trust estimation has a real labeled-data cost.

Stage 2 runs for $T_2$ rounds. Each round consists of (i) supervised
updates on the remaining labeled data, (ii) trust-gated
distillation from neighbor predictions on unlabeled data, and (iii)
periodic re-probing and trust-model retraining every $F_{\mathrm{trust}}$
rounds. Distillation here means training node $i$'s model on soft
predictions returned by neighbors on unlabeled examples, weighted by the
learned trust scores defined in Section~\ref{sec:trust}. This separation
keeps local supervision primary while allowing nodes to borrow signal only
when neighbors validate well on their target classes. This distillation step
uses an unlabeled pool, which is common in this setting (although our approach \emph{does not require this} and remains effective, as shown in Table~\ref{tab:ablation_pseudo}).
We describe all steps of stage 2 in detail below.

\subsection{Probe Responses and Learned Trust}
\label{sec:trust}

After Stage~1, node $i$ queries every
$j\in\bar{\mathcal{N}}(i)$ on the images in $\mathcal{D}_i^V$.
Using its private labels $y_i^V$, node $i$ computes the per-class probe
response
\begin{equation*}
\rho_{ij}^c
=
\frac{
\sum_{(x,y)\in\mathcal{D}_i^V}
\mathbf{1}[y=c]\,\mathbf{1}[\hat y_j(x)=c]
}{
\sum_{(x,y)\in\mathcal{D}_i^V}\mathbf{1}[y=c]
},
\qquad
\hat y_j(x)=\operatorname*{argmax}_{c'} p_j^{c'}(x).
\end{equation*}
The vector $\rho_{ij}\in[0,1]^C$ measures how useful neighbor $j$ is on
the classes that matter to node $i$.

Node $i$ also estimates neighbor $j$'s class distribution from
predictions on a random unlabeled subset
$\mathcal{S}\subseteq\mathcal{D}_i^U$:
\begin{equation*}
\hat w_j^c
=
\frac{1}{|\mathcal{S}|}
\sum_{x\in\mathcal{S}}
\mathbf{1}\!\left[\operatorname*{argmax}_{c'} p_j^{c'}(x)=c\right].
\end{equation*}
Let $w_i$ denote node $i$'s empirical training class distribution.
From $w_i$, $\hat w_j$, and the probe statistics, we build the
six-dimensional relational feature vector including the normalized degree of neighbor $j$, $d_j/n$: 
\begin{equation*}
\phi_{ij}
=
\bigl[
\textstyle\sum_c \min(w_i^c,\hat w_j^c),\,
\bar\rho_{ij},\,
\tilde\rho_{ij},\,
\mathrm{KL}(w_i\|\hat w_j),\,
H(\hat w_j),\,
d_j/n
\bigr]
\end{equation*}
where $\bar\rho_{ij}=C^{-1}\sum_c \rho_{ij}^c$ and
$\tilde\rho_{ij}=\sum_c w_i^c\rho_{ij}^c$.
These features summarize specialization, distributional alignment, and
probe performance without introducing a class-dependent input dimension.

Each node fits a local trust model
$g_{\psi_i}:\mathbb{R}^6\rightarrow\mathbb{R}$. 
Given a set of options (neighbors) $\mathcal{A}_i$, trust weights are
\begin{equation}
\alpha_{ij}
=
\frac{\exp(g_{\psi_i}(\phi_{ij}))}
{\sum_{k\in\mathcal{A}_i}\exp(g_{\psi_i}(\phi_{ik}))}
\label{eq:trust}
\end{equation}
We use $\mathcal{A}_i=\mathcal{N}(i)$ for distillation and
$\mathcal{A}_i=\bar{\mathcal{N}}(i)$ for deployment.
Node $i$ trains $\psi_i$ on its own validation labels by minimizing the
cross-entropy of the trust-weighted ensemble on $\mathcal{D}_i^V$:
\begin{equation*}
\min_{\psi_i}\;
\ell\!\left(
\sum_{j\in\bar{\mathcal{N}}(i)}
\alpha_{ij}\,p_j(\mathcal{D}_i^V),\,
y_i^V
\right)
\end{equation*}

\subsection{Trust-Gated Distillation and Deployment}
\label{sec:distill}

A significant consideration in many decentralized learning settings, such as IoT, is limited communication bandwidth.
We impose an exogeneous communication cost budget as follows.
In Stage~2, node $i$ queries neighbors on an unlabeled batch
$\mathcal{B}_i^U\subseteq\mathcal{D}_i^U$ and forms the trust-weighted
ensemble
\begin{equation}
\bar p_i(x)
=
\sum_{j\in\mathcal{N}(i)} \alpha_{ij}\,p_j(x)
\label{eq:ensemble}
\end{equation}
The size of this batch is a training-time hyperparameter we call the
\emph{per-round pseudo-label budget}, $B=|\mathcal{B}_i^U|$, held fixed
across nodes and rounds. 
$B$ controls how much unlabeled data each node
consumes per distillation step, imposing a communication budget, and therefore how much neighbor signal
can be absorbed before the confidence filter and gate below take effect.
Setting $B=0$ disables distillation entirely, in which case
equation~\eqref{eq:total} reduces to the supervised loss and LNTrust
degenerates to local training augmented by the deployment-time trust
ensemble. This boundary case is the $0$-pseudo row in
Table~\ref{tab:ablation_pseudo} and isolates the deployment contribution
from distillation.

\paragraph{Negative-transfer gate.}
To suppress negative transfer, we compare the validation performance of
the ensemble against node $i$'s own model using distribution-weighted
probe accuracy:
\begin{equation*}
a_i^{\mathrm{self}}
=
\sum_c w_i^c \rho_{ii}^c,
\qquad
a_i^{\mathrm{ens}}
=
\sum_{j\in\mathcal{N}(i)}
\alpha_{ij}\sum_c w_i^c \rho_{ij}^c.
\end{equation*}
The effective distillation weight is then
\begin{equation*}
\lambda_i^{\mathrm{eff}}
=
\lambda_{\mathrm{distil}}
\cdot
\min\!\left(
1,\frac{a_i^{\mathrm{ens}}}{a_i^{\mathrm{self}}+\epsilon}
\right)
\end{equation*}
where $\lambda_{\mathrm{distil}}$ is a hyperparmeter. If the ensemble is weaker than $i$'s own model on its target classes,
$\lambda_i^{\mathrm{eff}}$ shrinks proportionally; if it is stronger,
the full distillation weight $\lambda_{\mathrm{distil}}$ is applied.

\paragraph{Confidence filter.}
We retain only examples on which the ensemble is confident:
\begin{equation*}
\mathcal{B}_i^*
=
\bigl\{
x\in\mathcal{B}_i^U:
\max_c \bar p_i^c(x)>\tau(C)
\bigr\},
\qquad
\tau(C)
=
\max\!\bigl(\tau_{\mathrm{abs}},\,\tfrac{1}{C}+\tau_{\mathrm{conf}}\bigr),
\end{equation*}
combining an absolute floor $\tau_{\mathrm{abs}}$ with a margin
$\tau_{\mathrm{conf}}$ above the uniform level $1/C$. The floor prevents the filter from collapsing to a trivial constraint
as $C$ grows.

\paragraph{Distillation loss.}
We consider two instantiations of the distillation loss and treat the
choice as a hyperparameter of the training run, selected \emph{per
experiment based on the number of classes}. In both variants we use an
importance weight
\begin{equation}
\omega_i(x)=C\,w_i^{\hat c(x)},
\qquad
\hat c(x)=\operatorname*{argmax}_c \bar p_i^c(x),
\label{eq:iw}
\end{equation}
which upweights examples whose predicted class matches $i$'s target
distribution.

\smallskip
\noindent\textit{Hard pseudo-labels (used by default for small $C$).}
The student is trained against the ensemble argmax,
\begin{equation*}
\mathcal{L}_i^{\mathrm{distil-hard}}(\theta_i)
=
\frac{1}{|\mathcal{B}_i^*|}
\sum_{x\in\mathcal{B}_i^*}
\omega_i(x)\;
\ell\!\left(
h_{\theta_i}(x),\,
\hat c(x)
\right)
\end{equation*}
This is our default at $C{=}10$: hard labels are easier to learn
from per example and combine cleanly with the per-class importance
weight.

\smallskip
\noindent\textit{Soft distillation (used on occasion, typically for
larger $C$).}
The student is trained with KL divergence against the full ensemble
soft target,
\begin{equation*}
\mathcal{L}_i^{\mathrm{distil-soft}}(\theta_i)
=
\frac{\alpha_{\mathrm{KL}}}{|\mathcal{B}_i^*|}
\sum_{x\in\mathcal{B}_i^*}
\omega_i(x)\;
\mathrm{KL}\!\left(
\bar p_i(x)\,\big\|\,
\mathrm{softmax}(h_{\theta_i}(x))
\right),
\end{equation*}
with an additional scaling $\alpha_{\mathrm{KL}}\in(0,1]$ reflecting the
fact that KL gradients on a $C$-dimensional simplex carry more signal
per example than a single hard label. We select the soft variant when
the argmax is an unreliable summary of the ensemble, for instance when
$C$ is large enough that many classes carry non-trivial probability
mass, and use $\mathcal{L}_i^{\mathrm{distil}}
\in\{\mathcal{L}_i^{\mathrm{distil-hard}},
\mathcal{L}_i^{\mathrm{distil-soft}}\}$
as a single symbol in what follows.

\paragraph{Per-round objective.}
The per-round objective is
\begin{equation}
\mathcal{L}_i(t)
=
\mathcal{L}_i^{\mathrm{sup}}(\theta_i)
+
\lambda_i^{\mathrm{eff}}\,
\mathcal{L}_i^{\mathrm{distil}}(\theta_i),
\label{eq:total}
\end{equation}
implemented as sequential supervised and distillation updates.

\paragraph{Deployment.}
At test time, node $i$ forms a closed-neighborhood ensemble with
deployment weights $\alpha_{ij}^{\mathrm{deploy}}$ computed from
equation~\eqref{eq:trust} using $\mathcal{A}_i=\bar{\mathcal{N}}(i)$:
\begin{equation}
p_i^{\mathrm{deploy}}(x)
=
\sum_{j\in\bar{\mathcal{N}}(i)}
\alpha_{ij}^{\mathrm{deploy}}\,p_j(x)
\label{eq:deploy}
\end{equation}
We then evaluate the analogous distribution-weighted validation score:
\begin{equation*}
a_i^{\mathrm{deploy}}
=
\sum_{j\in\bar{\mathcal{N}}(i)}
\alpha_{ij}^{\mathrm{deploy}}
\sum_c w_i^c \rho_{ij}^c
\end{equation*}

\section{Analysis}
\label{sec:theory}

We provide two complementary theoretical guarantees: (i) a unified oracle inequality for the deployment risk of validation-fit trust weights, and (ii) a controlled-perturbation bound on the Stage 2 training trajectory. Together, these results demonstrate that LNTrust deployment is statistically well-posed and that the collaborative training phase remains stable even under weak peer signals.

\paragraph{Deployment risk.}
Let $\theta = (\theta_1,\dots,\theta_n)$ be the predictors at deployment. For node $i$ and trust weights $\alpha_i \in \Delta(\overline{\mathcal{N}}(i))$, the collaborative ensemble prediction is $q_i(y\mid x;\theta,\alpha_i) = \sum_{j\in\overline{\mathcal{N}}(i)} \alpha_{ij}\,p^y_j(x;\theta_j)$. We evaluate this ensemble using the clipped log-loss $\ell_{\epsilon_{\mathrm{dep}}}(u) = -\log\max\{u,\epsilon_{\mathrm{dep}}\}$ for fixed $\epsilon_{\mathrm{dep}}\in(0,1]$. The true deployment risk is:
\[
R^{\mathrm{dep}}_{i,\epsilon}(\theta,\alpha_i) 
\;:=\; 
\mathbb{E}_{(X,Y)\sim P_i}\!\left[\ell_{\epsilon_{\mathrm{dep}}}\!\big(q_i(Y\mid X;\theta,\alpha_i)\big)\right],
\]
with $\widehat R^{V,\mathrm{dep}}_{i,\epsilon}$ denoting the empirical risk on local validation set $\mathcal{D}^V_i$ ($|\mathcal{D}^V_i|=m_i$). 

LNTrust parameterizes these weights via the class $\mathcal{A}_i^{\mathrm{LN}} := \{ \alpha_i(g) : g \in \mathcal{G} \}$, where $g$ is a trust-MLP from class $\mathcal{G}$. Let $\hat\alpha^{\mathrm{dep}}_i$ be a $\xi_i$-suboptimal empirical minimizer of $\widehat R^{V,\mathrm{dep}}_{i,\epsilon}(\theta,\cdot)$ over $\mathcal{A}_i^{\mathrm{LN}}$.

\begin{assumption}[Finite-neighborhood Interpolation]
\label{ass:expressivity}
The relational features $\{\phi_{ij}: j\in\overline{\mathcal{N}}(i)\}$ are pairwise distinct, and $\mathcal{G}$ is sufficiently expressive to interpolate arbitrary real values on this finite set.
\end{assumption}

Assumption~\ref{ass:expressivity} is benign in decentralized settings: the features include node-specific statistics (e.g., KL divergence to local class distributions) that make exact collisions a measure-zero event. Furthermore, standard ReLU MLPs satisfy the interpolation property for any neighborhood of size $d_i$ with a single hidden layer of size $d_i$.

\begin{proposition}[Deployment Bound]
\label{thm:deploy_oracle}
Under Assumption~\ref{ass:expressivity}, for any $\delta\in(0,1)$, the following holds with probability at least $1-\delta$ over the draw of $\mathcal{D}^V_i$:
\[
R^{\mathrm{dep}}_{i,\epsilon}(\theta, \hat\alpha^{\mathrm{dep}}_i)
\;\le\;
\inf_{\alpha_i \in \Delta(\overline{\mathcal{N}}(i))} R^{\mathrm{dep}}_{i,\epsilon}(\theta,\alpha_i)
\;+\; \xi_i
\;+\; \frac{4}{\epsilon_{\mathrm{dep}}}\sqrt{\frac{2\log|\overline{\mathcal{N}}(i)|}{m_i}}
\;+\; \sqrt{\frac{2\log^2(1/\epsilon_{\mathrm{dep}})\log(2/\delta)}{m_i}}.
\]
\end{proposition}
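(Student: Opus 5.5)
Since the trust class $\mathcal{A}_i^{\mathrm{LN}}$ is parameterized by MLPs, the plan is first to remove it from the picture using Assumption~\ref{ass:expressivity}, and then to run a standard empirical-process argument over the simplex $\Delta(\overline{\mathcal{N}}(i))$. Let $K=|\overline{\mathcal{N}}(i)|$.

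\emph{Step 1: expressivity.} The weights in \eqref{eq:trust} are a softmax of the values $g(\phi_{ij})$. By the interpolation assumption, $\mathcal{G}$ can realise any logit vector $(z_j)_{j}\in\mathbb{R}^K$ on the distinct features. Hence $\mathcal{A}_i^{\mathrm{LN}}$ contains the relative interior of $\Delta(\overline{\mathcal{N}}(i))$, and it is contained in the full simplex. The map $\alpha\mapsto \ell_{\epsilon_{\mathrm{dep}}}(\sum_j\alpha_j p_j^y(x))$ is continuous, because $u\mapsto -\log\max\{u,\epsilon_{\mathrm{dep}}\}$ is continuous and $(1/\epsilon_{\mathrm{dep}})$-Lipschitz. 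Both empirical and true risks are therefore continuous in $\alpha$. Consequently $\inf_{\mathcal{A}_i^{\mathrm{LN}}}\widehat R^{V}=\min_{\Delta}\widehat R^V$, and the same holds for the true risk. So $\hat\alpha$ is a $\xi_i$-suboptimal empirical minimizer over the whole simplex.

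\emph{Step 2: oracle decomposition.} Let $\alpha^\star$ be a (near-)minimiser of the true risk over $\Delta$; the simplex is compact, so it is attained. The usual chain is
\[
R(\hat\alpha)\le \widehat R(\hat\alpha)+\sup_{\alpha\in\Delta}\bigl(R(\alpha)-\widehat R(\alpha)\bigr)\le \widehat R(\alpha^\star)+\xi_i+\sup_{\alpha\in\Delta}\bigl(R-\widehat R\bigr).
\]
It remains to bound $\widehat R(\alpha^\star)-R(\alpha^\star)$ and the uniform deviation.

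\emph{Step 3: concentration.} The loss takes values in $[0,\log(1/\epsilon_{\mathrm{dep}})]$. McDiarmid's inequality then bounds the supremum of the uniform deviation by its expectation plus $\log(1/\epsilon_{\mathrm{dep}})\sqrt{\log(2/\delta)/(2m_i)}$, with probability $1-\delta/2$. Hoeffding gives the same size of term for the fixed $\alpha^\star$, also with probability $1-\delta/2$. Summing the two terms and applying a union bound gives at most $\log(1/\epsilon_{\mathrm{dep}})\sqrt{2\log(2/\delta)/m_i}$, which is exactly the last term of the statement.

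\emph{Step 4: Rademacher bound.} Symmetrisation gives $\mathbb{E}\sup\le 2\mathfrak{R}_{m_i}$ of the loss class. Talagrand's contraction with Lipschitz constant $1/\epsilon_{\mathrm{dep}}$ reduces this to the Rademacher complexity of the linear class $\{(x,y)\mapsto\langle\alpha,(p_j^y(x))_j\rangle:\alpha\in\Delta\}$. A linear function over the simplex attains its supremum at a vertex, so this complexity equals that of the finite class $\{p_j^y\}_{j\in\overline{\mathcal{N}}(i)}$, whose values lie in $[0,1]$. Massart's finite-class lemma then gives at most $\sqrt{2\log K/m_i}$. Collecting factors yields $\frac{2}{\epsilon_{\mathrm{dep}}}\sqrt{2\log K/m_i}$. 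The stated constant $4$ leaves slack: it covers a two-sided contraction or bounding both directions of the deviation.

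\emph{Main obstacle.} The delicate step is Step 1. The softmax image is only the open simplex, and the statement compares against the infimum over all of $\Delta$, so continuity up to the boundary is what closes the gap. One must also check that $\hat\alpha$, being data-dependent, is still covered by the uniform bound, which holds because $\mathcal{A}_i^{\mathrm{LN}}\subseteq\Delta$. Finally, the contraction step requires the clipped log-loss to be Lipschitz on $[0,1]$, which holds with constant $1/\epsilon_{\mathrm{dep}}$ thanks to the clipping.
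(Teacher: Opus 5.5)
Your proof is correct. It uses the same building blocks as the paper: interpolation giving $\mathcal{A}_i^{\mathrm{LN}}\supseteq\operatorname{int}\Delta(\overline{\mathcal{N}}(i))$, the vertex reduction plus Massart's lemma, and contraction with constant $1/\epsilon_{\mathrm{dep}}$. The oracle decomposition is different, however.

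The paper proves a two-sided uniform deviation bound $\sup_{\alpha}|R-\widehat R|\le\Delta_i(\delta)$ over the simplex. It pays this bound at both $\hat\alpha^{\mathrm{dep}}_i$ and the comparator, obtaining $\inf_{\mathcal{A}_i^{\mathrm{LN}}}R+\xi_i+2\Delta_i(\delta)$. Only afterwards does it use density on the population side to replace $\inf_{\mathcal{A}_i^{\mathrm{LN}}}R$ by $\inf_{\Delta}R$.

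You instead apply density on the empirical side and fix a data-independent population minimiser $\alpha^\star$, which is attained by compactness and Lipschitz continuity. You then need the uniform bound in only one direction, handling $\widehat R(\alpha^\star)-R(\alpha^\star)$ pointwise by Hoeffding. That pointwise term carries no complexity cost, so you obtain $\frac{2}{\epsilon_{\mathrm{dep}}}\sqrt{2\log|\overline{\mathcal{N}}(i)|/m_i}$ rather than $\frac{4}{\epsilon_{\mathrm{dep}}}\sqrt{2\log|\overline{\mathcal{N}}(i)|/m_i}$. The confidence term $\log(1/\epsilon_{\mathrm{dep}})\sqrt{2\log(2/\delta)/m_i}$ is exactly the same. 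Your bound therefore implies the stated one. Your reading of the factor $4$ is accurate: it is the two-sided deviation paid twice.

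The paper's version has slightly simpler bookkeeping, since it works with infima throughout and needs no attained or fixed comparator. Yours is the sharper standard ERM analysis. Both rely in the same way on two facts: $\theta$ is fixed independently of $\mathcal{D}^V_i$, and $\mathcal{A}_i^{\mathrm{LN}}\subseteq\Delta(\overline{\mathcal{N}}(i))$, which is what lets the uniform bound cover the data-dependent trust class.
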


The proof (Appendix~\ref{app:proof_deploy}) leverages the fact that under Assumption~\ref{ass:expressivity}, the MLP parameterization is $\ell_1$-dense in the simplex $\Delta(\overline{\mathcal{N}}(i))$ (Proposition~\ref{prop:trust-expressivity}), allowing us to bound the risk relative to the \textit{absolute best} simplex weights rather than just the best-in-class weights. Because the ensemble is constrained to the simplex, the Rademacher complexity depends only on the number of neighbors, ensuring that using a high-capacity trust MLP does not lead to overfitting on the validation set.

\paragraph{Training as a controlled perturbation.}
Since black-box queries preclude direct optimization of the global objective, Stage~2 uses peer predictions only as a gated auxiliary signal. We quantify the stability of this process by comparing the LNTrust trajectory $\{\theta_{i,t}\}$ to a "self-only" trajectory $\{\vartheta_{i,t}\}$—initialized identically but trained without distillation. Under the smoothness assumptions in Appendix~\ref{app:perturbation}, we have:
\[
\|\theta_{i,T} - \vartheta_{i,T}\|
\;\le\;
\exp\!\Big(\textstyle\sum_{s<T} \eta^{\sup}_s \beta_{i,s}\Big)
\textstyle\sum_{t<T} \eta^{\mathrm{dist}}_t \, \lambda^{\mathrm{eff}}_{i,t} \, G_{i,t},
\]
where $\beta_{i,t}$ is the supervised smoothness constant, $G_{i,t}$ is the distillation gradient norm, and $\lambda^{\mathrm{eff}}_{i,t}$ is the effective trust gate. 

Crucially, the gate $\lambda^{\mathrm{eff}}_{i,t}$ is selective: its population version is monotone non-increasing in the peer ensemble's excess error (Lemma~\ref{lem:gate_selectivity}). When neighbors provide poor predictive utility, the gate shrinks and the collaborative perturbation collapses toward the local supervised solution. This ensures that collaboration is "safe"—even in highly heterogeneous neighborhoods, a node's local performance is protected from catastrophic divergence.
\section{Experiments}
\label{sec:experiments}
\smallskip
\noindent\textbf{Datasets and topologies.}
We evaluate on CIFAR-10, CIFAR-100, and EuroSAT with $n{=}50$ nodes under
three configurations. The \emph{sparse heterogeneous} setting (CIFAR)
combines round-robin label skew (skew 10; Dirichlet size
$\alpha_{\mathrm{size}}{=}0.05$) with a Barabasi--Albert graph
($m{=}2$, avg.\ degree ${\approx}4$), and a 90/10 architecture mix
(MobileNetV2 / EfficientNet-B0 at hubs) that stresses the
architecture-agnostic protocol. The \emph{dense homogeneous} setting
(CIFAR) uses a uniform random graph ($p{=}0.5$, avg.\ degree ${\approx}25$)
with all-MobileNetV2 nodes, which is favorable to parameter-sharing
baselines. The \emph{geographic} setting (EuroSAT) derives both data and
graph from the Sentinel-2 layout: each node owns a region ($k$-means on
tile coordinates, $k{=}50$) and edges follow distance-decayed preferential
attachment (97 edges, degree range 2--13); hubs again run EfficientNet-B0
(Figure~\ref{fig:eurosat_geo}). Each node also holds 2{,}000
class-balanced unlabeled examples for distillation and probing. Full
dataset, topology, and hyperparameter details are in
Appendix~\ref{app:hyperparams}.

\begin{figure}[htbp]
\centering
\begin{minipage}[b]{0.46\linewidth}
  \centering
  \includegraphics[height=4.5cm]{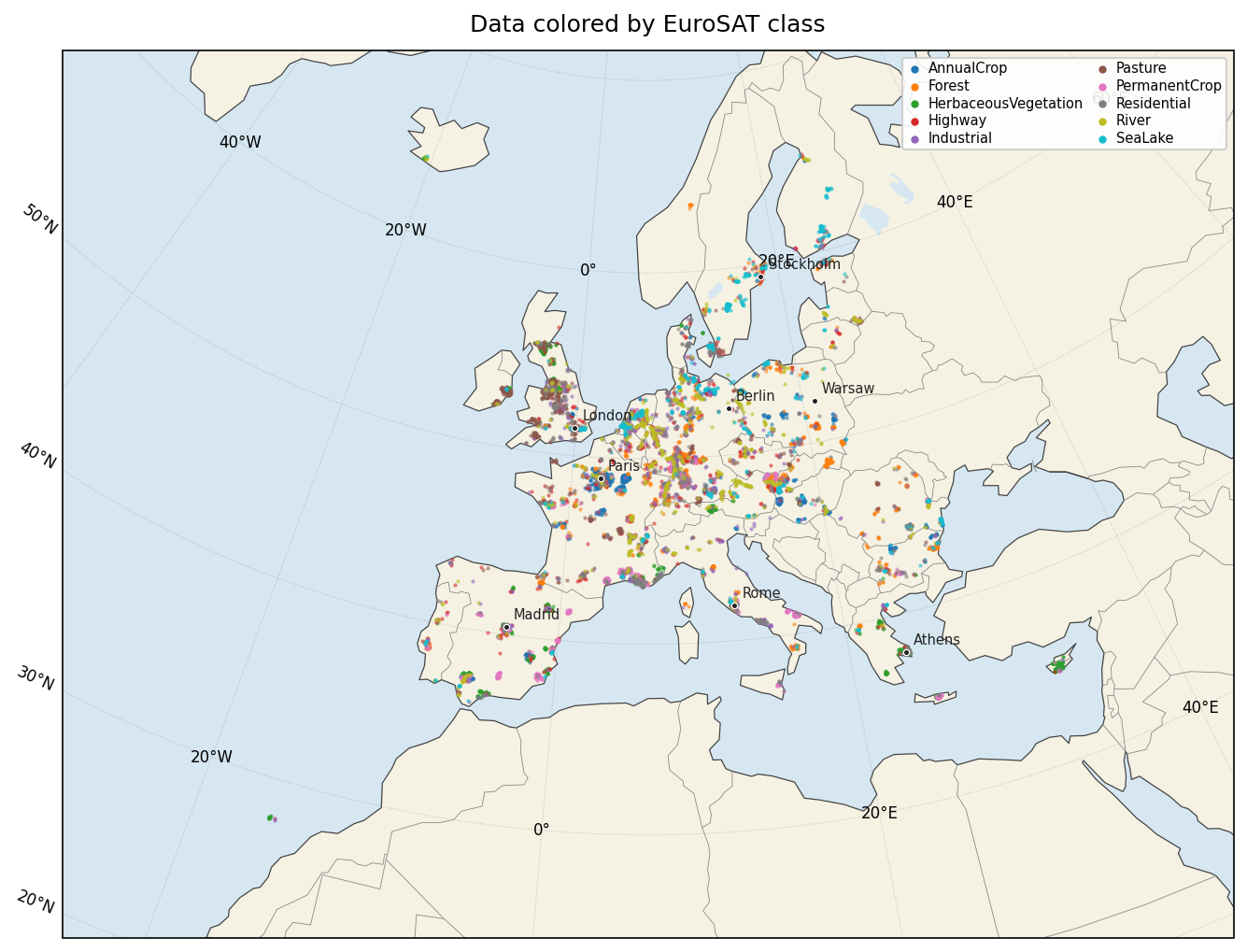}
\end{minipage}
\hfill
\begin{minipage}[b]{0.53\linewidth}
  \centering
  \includegraphics[height=4.5cm]{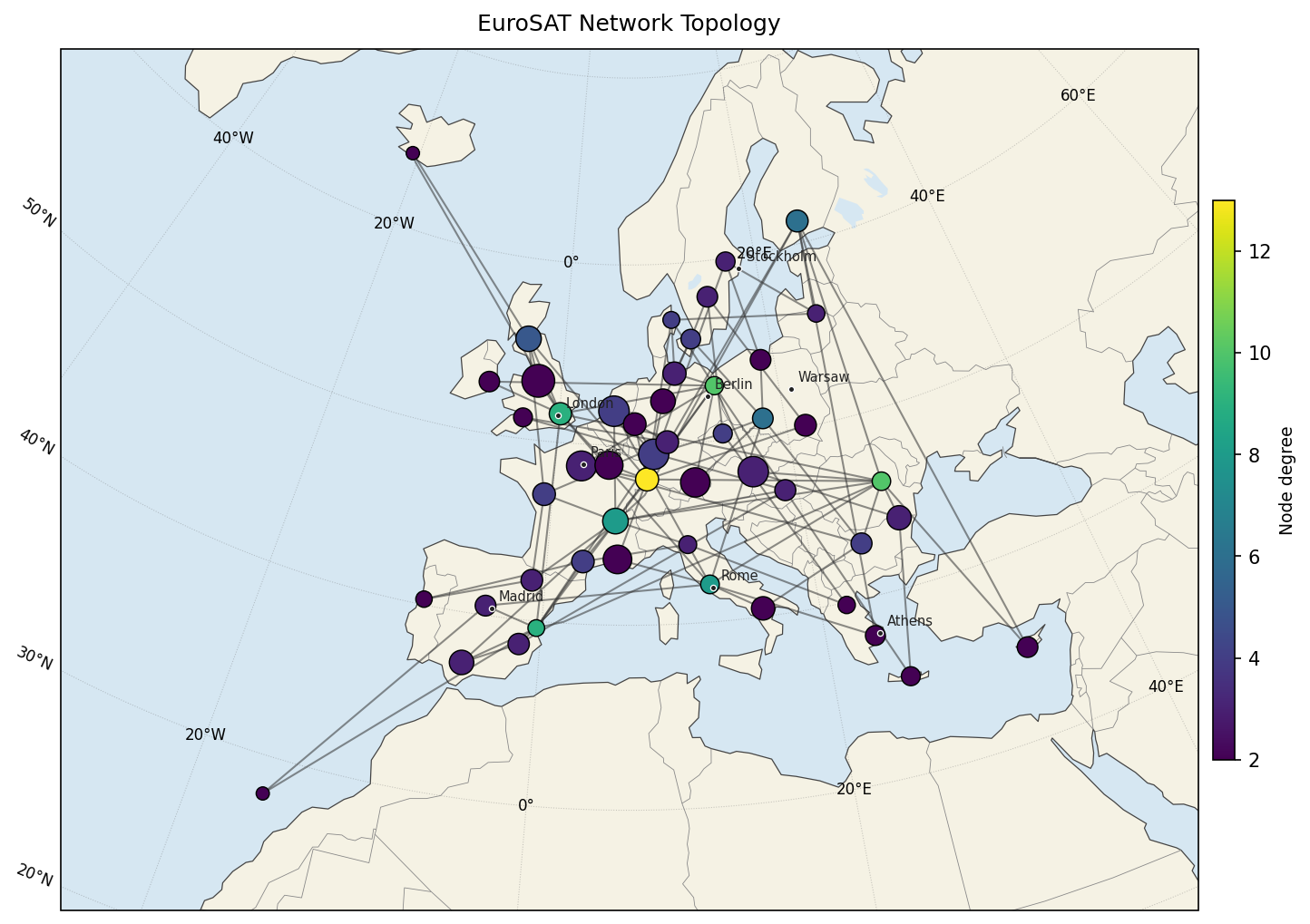}
\end{minipage}
\caption{EuroSAT geographic setup. \textbf{Left:} per-tile land-cover
class across Europe; each point is one Sentinel-2 tile colored by its
EuroSAT class label, and regional clustering of classes produces the
per-node label skew that drives heterogeneity in the geographic
setting. \textbf{Right:} 50-region communication graph obtained by
$k$-means clustering of Sentinel-2 tile coordinates, connected by
distance-decayed preferential attachment ($m{=}2$, 97 edges); node
size $\propto$ tiles per region and color encodes degree. Per-region
class distributions (Appendix~\ref{app:topology}) drive the geographic
heterogeneity used in Table~\ref{tab:eurosat_geo}.}
\label{fig:eurosat_geo}
\end{figure}

\smallskip
\noindent\textbf{Baselines.}
We compare against output-only and parameter-sharing methods.
\textbf{DML}~\cite{Zhang} uses mutual KL divergence on shared predictions;
\textbf{DESA}~\cite{huang2024overcoming} adds peer-to-peer anchor synthesis
and feature alignment; \textbf{FedPAE}~\cite{mueller2024fedpae} downloads
peer models and constructs a validation-optimized deployment ensemble without
collaborative training; \textbf{Mean Teacher}~\cite{TarvainenV17} ($p{=}0$)
uses an EMA consistency target.
\textbf{Independent Learning} ($p{=}0$) is the lower bound.
The dense homogeneous setting additionally includes parameter-sharing baselines
requiring architectural homogeneity: \textbf{D-PSGD}~\cite{lian2017can},
\textbf{Gossip-FedAvg}~\cite{hegedus2021gossip}, and \textbf{DecDiff-VT}~\cite{yuan2023decdiff}.
We exclude server-mediated methods (FedMD~\cite{li2019fedmd}, FedDF~\cite{lin2020ensemble})
and IDKD~\cite{ravikumar2024homogenizing}, which requires a public \emph{labeled} dataset.

\noindent\textbf{Communication model.}
Each node caches backbone features locally; when node~$i$ queries neighbor~$j$,
$j$ returns $C$-dimensional logit vectors. Both LNTrust and DML exchange only
logits. DESA additionally transmits synthesized anchor images and $d$-dimensional
feature vectors, incurring higher communication (Figure~\ref{fig:comm_efficiency}).

\noindent\textbf{Validation protocol and trust-estimation cost.}
All methods use the same Stage~1: each node trains on its full labeled
allocation with no data withheld.
After Stage~1, each node draws a fresh validation set
$\mathcal{D}_i^V$ from the unused global pool, disjoint from all training,
test, and unlabeled sets, and matched to its class distribution
$\mathcal{P}_i$.
These held-out examples are used only for probe responses and trust-model
training, avoiding leakage.
To prevent indirect leakage in Stage~2, the corresponding training examples
are removed, reducing each node's Stage~2 labeled budget.
Thus LNTrust pays a real cost: it trades supervised Stage~2 signal for trust
estimation.
This is reflected in the \(B=0\) row of Table~6, where the self-only model is
weaker than Independent Learning, while the validation-calibrated deployment
ensemble more than recovers the loss.

\noindent\textbf{Metrics.}
We report two metrics averaged over nodes and three random seeds.
\textbf{Test} is the primary metric: the accuracy of the trust-gated
ensemble each node deploys (for baselines, Test\,=\,Self since each
baseline node deploys only its own local model).
\textbf{Self} reports accuracy when every node deploys its own local model,
isolating self deployment vs the deployment described above.

\subsection{Results}
%
%

\begin{table}[t]
\centering
\caption{Sparse heterogeneous setting on CIFAR-10 and CIFAR-100:
BA graph ($m{=}2$, $p{=}0.1$), 90\% MobileNetV2 + 10\% EfficientNet-B0
at hubs, $n=50$ nodes. CIFAR-10 uses skew factor 10; CIFAR-100 uses
skew factor 5. Parameter-sharing methods cannot operate due to
architectural heterogeneity. Mean $\pm$ std over 3 seeds.}
\label{tab:agnostic_skewed}
\setlength{\tabcolsep}{4pt}
\begin{tabular}{lcccc}
\toprule
& \multicolumn{2}{c}{\textbf{CIFAR-10}}
& \multicolumn{2}{c}{\textbf{CIFAR-100}} \\
\cmidrule(lr){2-3}\cmidrule(lr){4-5}
\textbf{Method} & \textbf{Test} & \textbf{Self}
                & \textbf{Test} & \textbf{Self} \\
\midrule
Indep.\ Learning ($p{=}0$)
  & \multicolumn{2}{c}{$0.791 \pm 0.022$}
  & \multicolumn{2}{c}{$0.550 \pm 0.025$} \\
Mean Teacher~\cite{TarvainenV17} ($p{=}0$)
  & \multicolumn{2}{c}{$0.782 \pm 0.016$}
  & \multicolumn{2}{c}{$0.553 \pm 0.030$} \\
\midrule
DML~\cite{Zhang}
  & $0.795 \pm 0.015$ & $0.795 \pm 0.015$
  & $0.569 \pm 0.012$ & $0.569 \pm 0.012$ \\
DESA~\cite{huang2024overcoming}
  & $0.790 \pm 0.017$ & $0.790 \pm 0.017$
  & $0.565 \pm 0.020$ & $0.565 \pm 0.020$ \\
FedPAE~\cite{mueller2024fedpae}
  & $0.739 \pm 0.008$ & $0.739 \pm 0.008$
  & $0.549 \pm 0.005$ & $0.549 \pm 0.005$ \\
\midrule
\rowcolor{ours}
\textbf{Ours (LNTrust)}
  & $\mathbf{0.865 \pm 0.017}$ & $\mathbf{0.829 \pm 0.006}$
  & $\mathbf{0.615 \pm 0.015}$ & $0.557 \pm 0.009$ \\
\bottomrule
\end{tabular}
\end{table}

LNTrust improves on every output-only baseline across all three
settings while remaining competitive with parameter-sharing methods
where those can operate. In the sparse heterogeneous setting
(Table~\ref{tab:agnostic_skewed}; BA graph, 90/10
MobileNetV2/EfficientNet-B0), where parameter-sharing is infeasible,
LNTrust reaches $0.865$ on CIFAR-10 and $0.615$ on CIFAR-100 versus a
best baseline of $0.795$ and $0.569$, respectively, an improvement of over 8\% in both cases.

\enlargethispage{4\baselineskip}
\begin{wraptable}[13]{r}{0.56\textwidth}
\centering
\vspace{-\baselineskip}
\caption{EuroSAT geographic: 50 regions,
distance-decayed BA ($m{=}2$, 97 edges).}
\label{tab:eurosat_geo}
\setlength{\tabcolsep}{3pt}
\begin{tabular}{lcc}
\toprule
\textbf{Method} & \textbf{Test} & \textbf{Self} \\
\midrule
Indep.\ ($p{=}0$) & \multicolumn{2}{c}{$0.942 \pm 0.011$} \\
Mean Teacher~\cite{TarvainenV17} & \multicolumn{2}{c}{$0.947 \pm 0.005$} \\
\midrule
DML~\cite{Zhang}                 & $0.947 \pm 0.004$ & $0.947 \pm 0.004$ \\
DESA~\cite{huang2024overcoming}  & $0.946 \pm 0.004$ & $0.946 \pm 0.004$ \\
FedPAE~\cite{mueller2024fedpae}  & $0.948 \pm 0.004$ & $0.947 \pm 0.004$ \\
\midrule
\textbf{Ours} & $\mathbf{0.967 \pm 0.013}$ & $\mathbf{0.967 \pm 0.013}$ \\
\bottomrule
\end{tabular}
\end{wraptable}
In the geographic EuroSAT setting (Table~\ref{tab:eurosat_geo}), all output-only baselines
cluster within ${\approx}0.5$\,pp of Independent Learning's $94.2\%$, while LNTrust reaches
$96.7\%$. The near-identical Self and Test scores indicate the gain comes from Stage~2
distillation: the trust mechanism correctly recovers high self-weight when neighbors offer
little advantage.

The dense homogeneous setting (Table~\ref{tab:dense_homo}; uniform graph, all MobileNetV2)
is the regime most favorable to parameter-sharing: dense connectivity ($p{=}0.5$) and
homogeneous architecture make weight-space averaging a near-optimal variance reducer across
redundant local optima. Even here, LNTrust ($0.844$) outperforms every output-only baseline
and matches Gossip-FedAvg and DecDiff-VT while transmitting no weights and supporting
heterogeneous architectures by construction. D-PSGD ($0.870$) leads; weight-space mixing
plausibly helps more under architectural homogeneity, and we read this gap as the price
LNTrust pays for the restrictions it removes elsewhere. D-PSGD cannot participate in the
sparse heterogeneous or geographic settings due to mixed-architecture nodes, so the
comparison is genuinely setting-specific rather than a global ranking, and the relative
position of parameter-sharing and output-only methods inverts as soon as the homogeneity
assumption is dropped.

\begin{table}[t]
\centering
\caption{Dense homogeneous: uniform graph ($p{=}0.5$), all MobileNetV2.
CIFAR-10, $n{=}50$, 3 seeds. Italicized methods require architectural
homogeneity and transmit full weights.}
\label{tab:dense_homo}
\setlength{\tabcolsep}{4pt}
\begin{tabular}{lcc}
\toprule
\textbf{Method} & \textbf{Test} & \textbf{Self} \\
\midrule
Indep.\ Learning ($p{=}0$) & \multicolumn{2}{c}{$0.775 \pm 0.020$} \\
Mean Teacher~\cite{TarvainenV17} ($p{=}0$) & \multicolumn{2}{c}{$0.784 \pm 0.029$} \\
\midrule
\multicolumn{3}{l}{\textit{Parameter-sharing}} \\
\textit{Gossip-FedAvg}~\cite{hegedus2021gossip} & \textit{$0.842 \pm 0.019$} & \textit{$0.842 \pm 0.019$} \\
\textit{DecDiff-VT}~\cite{yuan2023decdiff}      & \textit{$0.842 \pm 0.022$} & \textit{$0.842 \pm 0.022$} \\
\textit{D-PSGD}~\cite{lian2017can}              & \textit{$\mathbf{0.870 \pm 0.015}$} & \textit{$\mathbf{0.870 \pm 0.015}$} \\
\midrule
\multicolumn{3}{l}{\textit{Output-only}} \\
DML~\cite{Zhang}                 & $0.819 \pm 0.018$ & $0.819 \pm 0.018$ \\
DESA~\cite{huang2024overcoming}  & $0.783 \pm 0.008$ & $0.783 \pm 0.008$ \\
FedPAE~\cite{mueller2024fedpae}  & $0.780 \pm 0.004$ & $0.780 \pm 0.004$ \\
\midrule
\rowcolor{ours}
\textbf{Ours (LNTrust)} & $\mathbf{0.844 \pm 0.019}$ & $\mathbf{0.803 \pm 0.014}$ \\
\bottomrule
\end{tabular}
\end{table}

\noindent\textbf{Communication efficiency.}
Figure~\ref{fig:comm_efficiency} shows accuracy per GB in the dense homogeneous setting.
LNTrust exchanges only unidirectional $C$-dimensional logits, versus bidirectional logits
(DML), anchor images and feature vectors (DESA), or full weights per round (parameter-sharing
methods). LNTrust achieves the highest efficiency ratio among all collaborative methods.

\noindent\textbf{Trust mechanism ablation.}
Table~\ref{tab:ablation_trust} replaces the trust MLP with three scalar-signal alternatives
(LinUCB, Weighted Majority, Hedge) and a free simplex ERM fit, holding the rest of the pipeline
fixed. LNTrust outperforms all four, confirming that structured relational features generalize
better than scalar reward accumulation or unregularized simplex weights under limited validation.
Additional ablations are in the appendix.

\begin{table}[t]
\centering
\caption{Neighbor weighting ablation (sparse heterogeneous setting).
All variants use the same output-only communication protocol, trust
gate, distillation loss, and BA graph ($m{=}2$, $p{=}0.1$) with
heterogeneous architectures. \textbf{Signal} indicates what
information drives the weight update. CIFAR-10, $n=50$ nodes, mean
$\pm$ std over 3 seeds.}
\label{tab:ablation_trust}
\setlength{\tabcolsep}{3pt}
\begin{tabular}{llcc}
\toprule
\textbf{Weighting}
  & \textbf{Signal}
  & \textbf{Test}
  & \textbf{Self} \\
\midrule
LinUCB~\cite{li2010contextual}
  & Reward + exploration
  & $0.791 \pm 0.022$
  & $0.791 \pm 0.022$ \\
Weighted Majority~\cite{littlestone1994weighted}
  & Per-class val accuracy
  & $0.849 \pm 0.003$
  & $0.840 \pm 0.003$ \\
Hedge~\cite{freund1997decision}
  & Val loss (exp.\ update)
  & $0.821 \pm 0.020$
  & $0.797 \pm 0.012$ \\
Simplex ERM (Eq.~\ref{eq:deployment-objective})
  & Free weights on val (Adam)
  & $0.812 \pm 0.015$
  & $0.796 \pm 0.012$ \\
\midrule
\rowcolor{ours}
\textbf{Ours (LNTrust)}
  & Relational features
  & $\mathbf{0.865 \pm 0.017}$
  & $\mathbf{0.829 \pm 0.006}$ \\
\bottomrule
\end{tabular}
\end{table}


\section{Conclusion} \label{sec:conclusion}
We introduced LNTrust, a decentralized learning method in which each node
learns a compact trust model over relational features to decide which
neighbors to distill from and how to weight their predictions at
deployment. LNTrust supports heterogeneous architectures, exchanges only queries and soft predictions, and requires no central coordinator. An existing limitation of LNTrust (and decentralized learning in general) is that neighborhoods must contain enough information to make collaboration helpful. Neighborhoods where each node has completely disjoint class distributions will not be aided by this method. Outside the scope of this work, but an important future direction is the analyzing how robust this method is to adversarial actors in this setting. 
\newpage
\bibliographystyle{plain}
\bibliography{neurips}

\begin{thebibliography}{10}

\bibitem{allouah2024fens}
Youssef Allouah, Akash Dhasade, Rachid Guerraoui, Nirupam Gupta, Anne-Marie Kermarrec, Rafael Pinot, Rafael Pires, and Rishi Sharma.
\newblock Revisiting ensembling in one-shot federated learning.
\newblock In {\em Advances in Neural Information Processing Systems}, volume~37, 2024.

\bibitem{bao2023atp}
Wenxuan Bao, Tianxin Wei, Haohan Wang, and Jingrui He.
\newblock Adaptive test-time personalization for federated learning.
\newblock In {\em Advances in Neural Information Processing Systems}, volume~36, 2023.

\bibitem{cisco_fog2015}
Flavio Bonomi, Rodolfo Milito, Jiang Zhu, and Sateesh Addepalli.
\newblock {IoT}: From cloud to fog computing.
\newblock Cisco Blogs, 2015.
\newblock Accessed: 2026-05-03.

\bibitem{chen2021fedbe}
Hong-You Chen and Wei-Lun Chao.
\newblock {FedBE}: Making bayesian model ensemble applicable to federated learning.
\newblock In {\em International Conference on Learning Representations}, 2021.

\bibitem{cisco_edge2024}
{Cisco Systems}.
\newblock What is edge computing?
\newblock Cisco, 2024.
\newblock Accessed: 2026-05-03.

\bibitem{t2020personalized}
Canh~T. Dinh, Nguyen~H. Tran, and Tuan~Dung Nguyen.
\newblock Personalized federated learning with moreau envelopes.
\newblock In {\em Advances in Neural Information Processing Systems}, volume~33, pages 21394--21405, 2020.

\bibitem{fallah2020personalized}
Alireza Fallah, Aryan Mokhtari, and Asuman Ozdaglar.
\newblock Personalized federated learning with theoretical guarantees: {A} model-agnostic meta-learning approach.
\newblock In {\em Advances in Neural Information Processing Systems}, volume~33, pages 3557--3568, 2020.

\bibitem{freund1997decision}
Yoav Freund and Robert~E. Schapire.
\newblock A decision-theoretic generalization of on-line learning and an application to boosting.
\newblock {\em Journal of Computer and System Sciences}, 55(1):119--139, 1997.

\bibitem{hamer2020fedboost}
Jenny Hamer, Mehryar Mohri, and Ananda~Theertha Suresh.
\newblock {FedBoost}: A communication-efficient algorithm for federated learning.
\newblock In {\em Proceedings of the 37th International Conference on Machine Learning}, volume 119 of {\em Proceedings of Machine Learning Research}, pages 3973--3983. PMLR, 2020.

\bibitem{hegedus2021gossip}
István Hegedűs, Gábor Danner, and Márk Jelasity.
\newblock Decentralized learning works: An empirical comparison of gossip learning and federated learning.
\newblock {\em Journal of Parallel and Distributed Computing}, 148:109--124, 2021.

\bibitem{huang2024overcoming}
Chun-Yin Huang, Kartik Srinivas, Xin Zhang, and Xiaoxiao Li.
\newblock Overcoming data and model heterogeneities in decentralized federated learning via synthetic anchors.
\newblock In {\em Proceedings of the 41st International Conference on Machine Learning}, 2024.

\bibitem{itahara2021distillation}
Sohei Itahara, Takayuki Nishio, Yusuke Koda, Masahiro Morikura, and Koji Yamamoto.
\newblock Distillation-based semi-supervised federated learning for communication-efficient collaborative training with non-{IID} private data.
\newblock {\em IEEE Transactions on Mobile Computing}, 22(1):191--205, 2023.

\bibitem{jiang2023fedthe}
Liangze Jiang and Tao Lin.
\newblock Test-time robust personalization for federated learning.
\newblock In {\em International Conference on Learning Representations}, 2023.

\bibitem{kairouz2021advances}
Peter Kairouz, H.~Brendan McMahan, Brendan Avent, Aur{\'e}lien Bellet, Mehdi Bennis, Arjun~Nitin Bhagoji, Kallista Bonawitz, Zachary Charles, Graham Cormode, Rachel Cummings, Rafael G.~L. D'Oliveira, Hubert Eichner, Salim El~Rouayheb, David Evans, Josh Gardner, Zachary Garrett, Adri{\`a} Gasc{\'o}n, Badih Ghazi, Phillip~B. Gibbons, Marco Gruteser, Zaid Harchaoui, Chaoyang He, Lie He, Zhouyuan Huo, Ben Hutchinson, Justin Hsu, Martin Jaggi, Tara Javidi, Gauri Joshi, Mikhail Khodak, Jakub Konec{\v{n}}{\'y}, Aleksandra Korolova, Farinaz Koushanfar, Sanmi Koyejo, Tancr{\`e}de Lepoint, Yang Liu, Prateek Mittal, Mehryar Mohri, Richard Nock, Ayfer {\"O}zg{\"u}r, Rasmus Pagh, Hang Qi, Daniel Ramage, Ramesh Raskar, Mariana Raykova, Dawn Song, Weikang Song, Sebastian~U. Stich, Ziteng Sun, Ananda~Theertha Suresh, Florian Tram{\`e}r, Praneeth Vepakomma, Jianyu Wang, Li~Xiong, Zheng Xu, Qiang Yang, Felix~X. Yu, Han Yu, and Sen Zhao.
\newblock Advances and open problems in federated learning.
\newblock {\em Foundations and Trends in Machine Learning}, 14(1--2):1--210, 2021.

\bibitem{karimireddy2020scaffold}
Sai~Praneeth Karimireddy, Satyen Kale, Mehryar Mohri, Sashank~J. Reddi, Sebastian~U. Stich, and Ananda~Theertha Suresh.
\newblock {SCAFFOLD}: Stochastic controlled averaging for federated learning.
\newblock In {\em Proceedings of the 37th International Conference on Machine Learning}, volume 119 of {\em Proceedings of Machine Learning Research}, pages 5132--5143. PMLR, 2020.

\bibitem{li2019fedmd}
Daliang Li and Junpu Wang.
\newblock {FedMD}: Heterogenous federated learning via model distillation.
\newblock {\em CoRR}, abs/1910.03581, 2019.

\bibitem{li2010contextual}
Lihong Li, Wei Chu, John Langford, and Robert~E. Schapire.
\newblock A contextual-bandit approach to personalized news article recommendation.
\newblock In {\em Proceedings of the 19th International Conference on World Wide Web}, pages 661--670, 2010.

\bibitem{li2021moon}
Qinbin Li, Bingsheng He, and Dawn Song.
\newblock Model-contrastive federated learning.
\newblock In {\em Proceedings of the IEEE/CVF Conference on Computer Vision and Pattern Recognition}, pages 10713--10722, 2021.

\bibitem{li2021ditto}
Tian Li, Shengyuan Hu, Ahmad Beirami, and Virginia Smith.
\newblock Ditto: Fair and robust federated learning through personalization.
\newblock In {\em Proceedings of the 38th International Conference on Machine Learning}, volume 139 of {\em Proceedings of Machine Learning Research}, pages 6357--6368. PMLR, 2021.

\bibitem{li2020fedprox}
Tian Li, Anit~Kumar Sahu, Manzil Zaheer, Maziar Sanjabi, Ameet Talwalkar, and Virginia Smith.
\newblock Federated optimization in heterogeneous networks.
\newblock In {\em Proceedings of Machine Learning and Systems}, volume~2, pages 429--450, 2020.

\bibitem{lian2017can}
Xiangru Lian, Ce~Zhang, Huan Zhang, Cho-Jui Hsieh, Wei Zhang, and Ji~Liu.
\newblock Can decentralized algorithms outperform centralized algorithms? {A} case study for decentralized parallel stochastic gradient descent.
\newblock In {\em Advances in Neural Information Processing Systems}, volume~30, 2018.

\bibitem{lin2020ensemble}
Tao Lin, Lingjing Kong, Sebastian~U. Stich, and Martin Jaggi.
\newblock Ensemble distillation for robust model fusion in federated learning.
\newblock In {\em Advances in Neural Information Processing Systems}, volume~33, pages 2351--2363, 2020.

\bibitem{littlestone1994weighted}
Nick Littlestone and Manfred~K. Warmuth.
\newblock The weighted majority algorithm.
\newblock {\em Information and Computation}, 108(2):212--261, 1994.

\bibitem{marfoq2022knnper}
Othmane Marfoq, Giovanni Neglia, Richard Vidal, and Laetitia Kameni.
\newblock Personalized federated learning through local memorization.
\newblock In {\em Proceedings of the 39th International Conference on Machine Learning}, volume 162 of {\em Proceedings of Machine Learning Research}, pages 15070--15092. PMLR, 2022.

\bibitem{mcmahan2017communication}
H.~Brendan McMahan, Eider Moore, Daniel Ramage, Seth Hampson, and Blaise {Aguera y Arcas}.
\newblock Communication-efficient learning of deep networks from decentralized data.
\newblock In {\em Proceedings of the 20th International Conference on Artificial Intelligence and Statistics}, volume~54 of {\em Proceedings of Machine Learning Research}, pages 1273--1282. PMLR, 2017.

\bibitem{mnih2015humanlevel}
Volodymyr Mnih, Koray Kavukcuoglu, David Silver, Andrei~A. Rusu, Joel Veness, Marc~G. Bellemare, Alex Graves, Martin Riedmiller, Andreas~K. Fidjeland, Georg Ostrovski, Stig Petersen, Charles Beattie, Amir Sadik, Ioannis Antonoglou, Helen King, Dharshan Kumaran, Daan Wierstra, Shane Legg, and Demis Hassabis.
\newblock Human-level control through deep reinforcement learning.
\newblock {\em Nature}, 518(7540):529--533, 2015.

\bibitem{mueller2024fedpae}
Brianna Mueller, W.~Nick Street, Stephen Baek, Qihang Lin, Jingyi Yang, and Yankun Huang.
\newblock {FedPAE}: Peer-adaptive ensemble learning for asynchronous and model-heterogeneous federated learning.
\newblock In {\em Proceedings of the 2024 IEEE International Conference on Big Data}, pages 7961--7970, 2024.

\bibitem{ravikumar2024homogenizing}
Deepak Ravikumar, Gobinda Saha, Sai~Aparna Aketi, and Kaushik Roy.
\newblock Homogenizing non-{IID} datasets via in-distribution knowledge distillation for decentralized learning.
\newblock {\em Transactions on Machine Learning Research}, 2024.

\bibitem{seo2026federated}
Jungwon Seo, Ferhat~Ozgur Catak, Chunming Rong, and Jaeyeon Jang.
\newblock Federated inference: Toward privacy-preserving collaborative and incentivized model serving.
\newblock {\em CoRR}, abs/2603.02214, 2026.

\bibitem{TarvainenV17}
Antti Tarvainen and Harri Valpola.
\newblock Weight-averaged consistency targets improve semi-supervised deep learning results.
\newblock {\em CoRR}, abs/1703.01780, 2017.

\bibitem{yuan2023decdiff}
Lorenzo Valerio, Chiara Boldrini, Andrea Passarella, J{\'a}nos Kert{\'e}sz, M{\'a}rton Karsai, and Gerardo I{\~n}iguez.
\newblock Coordination-free decentralised federated learning in pervasive networks: Overcoming heterogeneity.
\newblock {\em Pervasive and Mobile Computing}, 118:102184, 2026.

\bibitem{zhang2021fedfomo}
Michael Zhang, Karan Sapra, Sanja Fidler, Serena Yeung, and Jose~M. Alvarez.
\newblock Personalized federated learning with first order model optimization.
\newblock In {\em International Conference on Learning Representations}, 2021.

\bibitem{Zhang}
Ying Zhang, Tao Xiang, Timothy~M. Hospedales, and Huchuan Lu.
\newblock Deep mutual learning.
\newblock In {\em Proceedings of the IEEE Conference on Computer Vision and Pattern Recognition}, pages 4320--4328, 2018.

\end{thebibliography}

\newpage 
\newpage

\appendix

\appendix

\newpage
\section{Appendix}
\section{Theory: Proofs and Auxiliary Results}
\label{app:proof}

\subsection{Expressivity of the trust parameterization}
\label{app:expressivity}

We discharge Assumption~\ref{ass:expressivity} by showing that the LNTrust
softmax-MLP class is dense in $\Delta(\bar{\mathcal N}(i))$ under $\ell_1$
whenever the relational features separate distinct neighbors. Together
with the $\ell_1$-Lipschitz continuity of the deployment risk used in the
proof of Proposition~\ref{thm:deploy_oracle}, this collapses the
approximation gap relative to the unrestricted simplex.

For a scalar trust function $g:\mathbb R^d\to\mathbb R$ and features
$\{\phi_{ij}:j\in\bar{\mathcal N}(i)\}$, write
\[
\alpha_{ij}(g)
:=
\frac{\exp(g(\phi_{ij}))}{\sum_{k\in\bar{\mathcal N}(i)} \exp(g(\phi_{ik}))},
\qquad j\in\bar{\mathcal N}(i),
\]
and $\mathcal A_i^{\mathrm{LN}} := \{\alpha_i(g): g\in\mathcal G\}$.

\begin{proposition}[Trust-class density]
\label{prop:trust-expressivity}
Suppose the features $\{\phi_{ij}:j\in\bar{\mathcal N}(i)\}\subset\mathbb R^6$
are pairwise distinct, and let $\mathcal G$ be a function class capable of
interpolating arbitrary real values on this finite set. Then
$\mathcal A_i^{\mathrm{LN}}\supseteq\operatorname{int}\Delta(\bar{\mathcal N}(i))$
and is $\ell_1$-dense in $\Delta(\bar{\mathcal N}(i))$.
\end{proposition}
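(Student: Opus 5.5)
The plan is to exhibit, for each interior point of the simplex, an explicit trust function realizing it. Density then follows from the elementary fact that the interior of a simplex is $\ell_1$-dense in the simplex.

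\emph{Step 1 (interior containment).} Fix any $\beta\in\operatorname{int}\Delta(\bar{\mathcal N}(i))$, so that $\beta_j>0$ for every $j\in\bar{\mathcal N}(i)$. Define target values $v_j:=\log\beta_j\in\mathbb R$. Because the features $\phi_{ij}$ are pairwise distinct, the assignment $\phi_{ij}\mapsto v_j$ is a well-defined function on a finite set. By the interpolation hypothesis on $\mathcal G$, there is some $g\in\mathcal G$ with $g(\phi_{ij})=v_j$ for all $j$. Then
\[
\alpha_{ij}(g)=\frac{\beta_j}{\sum_k\beta_k}=\beta_j,
\]
since $\sum_k\beta_k=1$. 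Hence $\beta\in\mathcal A_i^{\mathrm{LN}}$. Distinctness is exactly what is needed here: if two neighbors shared a feature vector, $g$ would be forced to assign them equal weights, and targets with $\beta_j\neq\beta_k$ could not be reached.

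\emph{Step 2 (density).} Take an arbitrary $\alpha\in\Delta(\bar{\mathcal N}(i))$, which may lie on the boundary. Let $u$ be the uniform distribution over $\bar{\mathcal N}(i)$ and set $\alpha^{(t)}:=(1-t)\alpha+tu$ for $t\in(0,1]$. Every coordinate of $\alpha^{(t)}$ is at least $t/|\bar{\mathcal N}(i)|>0$, so $\alpha^{(t)}$ lies in the interior, and by Step 1 it belongs to $\mathcal A_i^{\mathrm{LN}}$. Moreover
\[
\|\alpha^{(t)}-\alpha\|_1=t\|u-\alpha\|_1\le 2t\to 0,
\]
which gives $\ell_1$-density.

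No step is a genuine obstacle. The one point to state carefully is why boundary points are obtained only in the limit: softmax outputs are strictly positive, so points with a zero coordinate are never attained exactly. This is why the statement claims containment only for the interior together with density. The approximation gap used in Proposition~\ref{thm:deploy_oracle} is then closed by continuity of the risk, which the paper supplies separately.
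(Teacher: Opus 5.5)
Your proof is correct and follows the paper's argument essentially step for step: interpolate the log-weights $\log\beta_j$ so the softmax reproduces any interior point exactly, then reach boundary points by mixing with the uniform distribution, which gives $\|\alpha^{(t)}-\alpha\|_1\le 2t\to 0$. Your remark that boundary points are reached only in the limit, because softmax outputs are strictly positive, is a useful clarification that the paper leaves implicit.
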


\begin{proof}
Let $\alpha^\star\in\operatorname{int}\Delta(\bar{\mathcal N}(i))$. The
target logits $z_j := \log\alpha^\star_j$ are finite. By the interpolation
hypothesis on $\mathcal G$ (applicable because the features are pairwise
distinct), there exists $g\in\mathcal G$ with $g(\phi_{ij})=z_j$ for every
$j$. Then
\[
\alpha_{ij}(g)
=
\frac{\exp(z_j)}{\sum_{k}\exp(z_k)}
=
\frac{\alpha^\star_j}{\sum_{k}\alpha^\star_k}
=
\alpha^\star_j,
\]
using $\sum_k\alpha^\star_k=1$. This proves exact representation on the
open simplex.

For an arbitrary $\alpha^\star\in\Delta(\bar{\mathcal N}(i))$, possibly on
the boundary, let $u$ denote the uniform distribution on $\bar{\mathcal N}(i)$
and set $\alpha^\rho := (1-\rho)\alpha^\star + \rho u$ for $\rho\in(0,1)$.
Then $\alpha^\rho\in\operatorname{int}\Delta(\bar{\mathcal N}(i))$, so by
the first part there exists $g_\rho\in\mathcal G$ with
$\alpha_i(g_\rho)=\alpha^\rho$. Since
$\|\alpha^\rho-\alpha^\star\|_1 = \rho\|u-\alpha^\star\|_1 \le 2\rho$,
sending $\rho\to 0$ proves $\ell_1$-density.
\end{proof}

\begin{remark}[Feature collisions]
\label{rmk:collisions}
The pairwise-distinctness hypothesis rules out a trivial obstruction: if
$\phi_{ij}=\phi_{ik}$ for $j\neq k$, any deterministic trust function must
satisfy $g(\phi_{ij})=g(\phi_{ik})$, forcing $\alpha_{ij}=\alpha_{ik}$. The
six relational features include node-specific quantities (validation probe
responses, KL divergence to the local class distribution, neighbor-side
class entropy) that make exact collisions a measure-zero event in practice.
\end{remark}

\begin{remark}[Interpolation capacity of standard ReLU MLPs]
\label{rmk:capacity}
The interpolation hypothesis on $\mathcal G$ holds for shallow ReLU
networks of modest width. Since the features are pairwise distinct, there
exists a direction $a\in\mathbb R^6$ along which the projections
$\{a^\top\phi_{ij}\}$ are pairwise distinct; a one-hidden-layer ReLU
network of width $|\bar{\mathcal N}(i)|$ then interpolates arbitrary values
on these projections. The $6\to 32\to 32\to 1$ trust head used in our
experiments comfortably exceeds the largest closed-neighborhood size in
any setting we evaluate.
\end{remark}

\subsection{Proof of Proposition~\ref{thm:deploy_oracle}}
\label{app:proof_deploy}

\begin{proof}[Proof of Proposition~\ref{thm:deploy_oracle}]
Fix a node $i$ and condition on the deployment predictors
$\theta=(\theta_1,\ldots,\theta_n)$. Write
\[
\bar N_i := \bar{\mathcal N}(i),
\qquad
d_i := |\bar N_i|,
\qquad
S := \mathcal D_i^V = \{z_k\}_{k=1}^{m_i},
\qquad
z_k = (x_k, y_k).
\]
The argument is conditional on fixed predictors: the validation sample
selects deployment weights only after the candidate predictors have been
fixed.

\paragraph{Step 1: Rademacher complexity of the closed-neighborhood simplex
score class.}
For a function family $\mathcal F$, the empirical Rademacher complexity is
$\widehat{\mathfrak R}_S(\mathcal F) = \mathbb E_\sigma[\sup_{f\in\mathcal F}
m_i^{-1}\sum_{k=1}^{m_i}\sigma_k f(z_k)]$, with
$\sigma_1,\ldots,\sigma_{m_i}$ independent Rademacher variables and the
sample $S$ fixed. Define the score class
\[
\mathcal S_i^\Delta
:=
\Bigl\{
z=(x,y)\mapsto \textstyle\sum_{j\in\bar N_i}\alpha_{ij}\,p_j^y(x;\theta_j)
:
\alpha_i\in\Delta(\bar N_i)
\Bigr\}.
\]
For fixed $S$, setting
$c_j(S,\sigma) := m_i^{-1}\sum_{k=1}^{m_i}\sigma_k p_j^{y_k}(x_k;\theta_j)$,
the supremum of $\sum_{j} \alpha_{ij} c_j(S,\sigma)$ over the simplex is
attained at a vertex, so
\[
\widehat{\mathfrak R}_S(\mathcal S_i^\Delta)
= \mathbb E_\sigma\!\Big[\max_{j\in\bar N_i} c_j(S,\sigma)\Big]
= \tfrac{1}{m_i}\,\mathbb E_\sigma\!\Big[\max_{j\in\bar N_i}\langle\sigma, v_j(S)\rangle\Big],
\]
where
$v_j(S) := (p_j^{y_1}(x_1;\theta_j),\ldots,p_j^{y_{m_i}}(x_{m_i};\theta_j))
\in \mathbb R^{m_i}$.
Since $p_j^{y_k}(x_k;\theta_j)\in[0,1]$ we have $\|v_j(S)\|_2\le\sqrt{m_i}$,
and Massart's finite-class lemma gives
\begin{equation}
\widehat{\mathfrak R}_S(\mathcal S_i^\Delta)
\;\le\;
\sqrt{\tfrac{2\log d_i}{m_i}}.
\label{eq:rad-simplex}
\end{equation}

\paragraph{Step 2: Contraction through the clipped log-loss.}
Let $\phi(u) := -\log\max\{u,\varepsilon_{\mathrm{dep}}\}$ for $u\in[0,1]$
and $0<\varepsilon_{\mathrm{dep}}\le 1$. Then
$0 \le \phi(u) \le \log(1/\varepsilon_{\mathrm{dep}})$ and $\phi$ is
$1/\varepsilon_{\mathrm{dep}}$-Lipschitz on $[0,1]$ (constant on
$[0,\varepsilon_{\mathrm{dep}}]$; derivative $1/u\le 1/\varepsilon_{\mathrm{dep}}$
on $[\varepsilon_{\mathrm{dep}},1]$). Define the loss class
\[
\mathcal L_i^\Delta
:=
\Bigl\{
z=(x,y)\mapsto \phi\Big(\textstyle\sum_{j\in\bar N_i}\alpha_{ij}\,p_j^y(x;\theta_j)\Big)
:
\alpha_i\in\Delta(\bar N_i)
\Bigr\}.
\]
Setting $\tilde\phi(u):=\phi(u)-\phi(0)$ leaves $\widehat{\mathfrak R}_S$
unchanged (the constant term contributes
$\mathbb E_\sigma[m_i^{-1}\sum_k \sigma_k]=0$), and $\tilde\phi$ is
$1/\varepsilon_{\mathrm{dep}}$-Lipschitz with $\tilde\phi(0)=0$. The
Ledoux--Talagrand contraction inequality then yields
\[
\widehat{\mathfrak R}_S(\mathcal L_i^\Delta)
\;\le\;
\tfrac{1}{\varepsilon_{\mathrm{dep}}}\,
\widehat{\mathfrak R}_S(\mathcal S_i^\Delta)
\;\le\;
\tfrac{1}{\varepsilon_{\mathrm{dep}}}\sqrt{\tfrac{2\log d_i}{m_i}},
\]
and taking expectations preserves the same bound for
$\mathfrak R_{m_i}(\mathcal L_i^\Delta)$. Restricting to the LNTrust class
$\mathcal L_i^{\mathrm{LN}}$ (defined analogously with
$\alpha_i\in\mathcal A_i^{\mathrm{LN}}$) only shrinks the supremum, so the
same bound applies; no separate complexity analysis of the trust MLP is
required.

\paragraph{Step 3: Uniform deviation and ERM.}
Every loss in $\mathcal L_i^{\mathrm{LN}}$ takes values in
$[0,\log(1/\varepsilon_{\mathrm{dep}})]$. The standard two-sided uniform
Rademacher generalization bound therefore implies that, with probability
at least $1-\delta$ over the draw of $S$,
\[
\sup_{\alpha_i\in\mathcal A_i^{\mathrm{LN}}}
\Bigl|R_{i,\varepsilon}^{\mathrm{dep}}(\theta,\alpha_i)
- \widehat R_{i,\varepsilon}^{V,\mathrm{dep}}(\theta,\alpha_i)\Bigr|
\;\le\;
\Delta_i(\delta),
\]
where
\[
\Delta_i(\delta)
:=
\tfrac{2}{\varepsilon_{\mathrm{dep}}}\sqrt{\tfrac{2\log d_i}{m_i}}
\;+\;
\log\!\tfrac{1}{\varepsilon_{\mathrm{dep}}}\sqrt{\tfrac{\log(2/\delta)}{2m_i}}.
\]
Combining this with the $\xi_i$-suboptimality of $\hat\alpha_i^{\mathrm{dep}}$
yields, on the same event,
\[
R_{i,\varepsilon}^{\mathrm{dep}}(\theta,\hat\alpha_i^{\mathrm{dep}})
\;\le\;
\inf_{\alpha_i\in\mathcal A_i^{\mathrm{LN}}}
R_{i,\varepsilon}^{\mathrm{dep}}(\theta,\alpha_i)
\;+\; \xi_i \;+\; 2\Delta_i(\delta).
\]

\paragraph{Step 4: Closing the infimum gap via density.}
Under Assumption~\ref{ass:expressivity},
Proposition~\ref{prop:trust-expressivity} (Appendix~\ref{app:expressivity})
shows that $\mathcal A_i^{\mathrm{LN}}$ is $\ell_1$-dense in
$\Delta(\bar N_i)$. The risk
$R_{i,\varepsilon}^{\mathrm{dep}}(\theta,\cdot)$ is
$1/\varepsilon_{\mathrm{dep}}$-Lipschitz with respect to $\|\cdot\|_1$:
since each $p_j^y\in[0,1]$,
$|q_i(y\mid x;\theta,\alpha_i)-q_i(y\mid x;\theta,\alpha_i')|
\le \|\alpha_i-\alpha_i'\|_1$, and $\phi$ is
$1/\varepsilon_{\mathrm{dep}}$-Lipschitz; composition then gives the
claim after taking expectations. Density and Lipschitz continuity together
imply
\[
\inf_{\alpha_i\in\mathcal A_i^{\mathrm{LN}}}
R_{i,\varepsilon}^{\mathrm{dep}}(\theta,\alpha_i)
\;=\;
\inf_{\alpha_i\in\Delta(\bar N_i)}
R_{i,\varepsilon}^{\mathrm{dep}}(\theta,\alpha_i).
\]
Substituting and unrolling $\Delta_i(\delta)$ gives
\[
R_{i,\varepsilon}^{\mathrm{dep}}(\theta,\hat\alpha_i^{\mathrm{dep}})
\le
\inf_{\alpha_i\in\Delta(\bar N_i)} R_{i,\varepsilon}^{\mathrm{dep}}(\theta,\alpha_i)
+ \xi_i
+ \tfrac{4}{\varepsilon_{\mathrm{dep}}}\sqrt{\tfrac{2\log|\bar{\mathcal N}(i)|}{m_i}}
+ 2\log\!\tfrac{1}{\varepsilon_{\mathrm{dep}}}\sqrt{\tfrac{\log(2/\delta)}{2m_i}}.
\qedhere
\]
\end{proof}

\subsection{Training as a controlled perturbation of self-only updates}
\label{app:perturbation}

Fix a node $i$. Let $\widehat{\mathcal L}_{i,t}^{\mathrm{sup}}$ denote the
supervised mini-batch loss at round $t$ and
$\widehat{\mathcal L}_{i,t}^{\mathrm{distil}}$ the realized distillation
loss formed from \eqref{eq:ensemble}--\eqref{eq:iw}; if
$\mathcal B_{i,t}^\star=\varnothing$ we set
$\widehat{\mathcal L}_{i,t}^{\mathrm{distil}}\equiv 0$. The realized
LNTrust update is
\begin{align}
\theta_{i,t+\frac12}
&= \theta_{i,t}
- \eta_t^{\mathrm{sup}} \nabla \widehat{\mathcal L}_{i,t}^{\mathrm{sup}}(\theta_{i,t}),
\label{eq:theory-actual-update-sup}\\
\theta_{i,t+1}
&= \theta_{i,t+\frac12}
- \eta_t^{\mathrm{dist}} \lambda_{i,t}^{\mathrm{eff}}
  \nabla \widehat{\mathcal L}_{i,t}^{\mathrm{distil}}(\theta_{i,t+\frac12}).
\label{eq:theory-actual-update}
\end{align}
The self-only baseline $\{\vartheta_{i,t}\}_{t\ge 0}$ shares the same
initialization, supervised mini-batches, and supervised optimizer, but
omits the distillation step:
\begin{equation}
\vartheta_{i,t+1}
= \vartheta_{i,t}
- \eta_t^{\mathrm{sup}} \nabla \widehat{\mathcal L}_{i,t}^{\mathrm{sup}}(\vartheta_{i,t}).
\label{eq:theory-self-update}
\end{equation}

\begin{proposition}[Bounded drift from self-training]
\label{prop:self-stability}
Suppose for each round $t$:
(i) $\widehat{\mathcal L}_{i,t}^{\mathrm{sup}}$ has $\beta_{i,t}$-Lipschitz
gradient; and
(ii) there exists $G_{i,t}\ge 0$ such that
$\|\nabla \widehat{\mathcal L}_{i,t}^{\mathrm{distil}}(\theta)\|\le G_{i,t}$
for every $\theta$ in a region containing both trajectories up to round $t$.
If $\theta_{i,0}=\vartheta_{i,0}$, then for every $T\ge 1$,
\begin{equation}
\|\theta_{i,T}-\vartheta_{i,T}\|
\;\le\;
\exp\!\Big(\textstyle\sum_{s=0}^{T-1}\eta_s^{\mathrm{sup}}\beta_{i,s}\Big)
\textstyle\sum_{t=0}^{T-1}
\eta_t^{\mathrm{dist}}\lambda_{i,t}^{\mathrm{eff}} G_{i,t}.
\label{eq:self-stability-main}
\end{equation}
\end{proposition}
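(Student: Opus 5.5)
The argument is a discrete Grönwall recursion on the deviation $e_t := \|\theta_{i,t}-\vartheta_{i,t}\|$, with $e_0=0$ by the shared initialization. The goal is a one-step inequality of the form
\[
e_{t+1} \le (1+\eta_t^{\mathrm{sup}}\beta_{i,t})\,e_t + \eta_t^{\mathrm{dist}}\lambda_{i,t}^{\mathrm{eff}} G_{i,t}.
\]
Once this is in hand, I unroll it and use $1+x\le e^x$ to obtain the stated bound.

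To derive the one-step inequality, subtract \eqref{eq:theory-self-update} from the composition of \eqref{eq:theory-actual-update-sup} and \eqref{eq:theory-actual-update}. This gives
\[
\theta_{i,t+1}-\vartheta_{i,t+1}
= \bigl[\theta_{i,t}-\vartheta_{i,t}\bigr] - \eta_t^{\mathrm{sup}}\bigl[\nabla\widehat{\mathcal L}_{i,t}^{\mathrm{sup}}(\theta_{i,t})-\nabla\widehat{\mathcal L}_{i,t}^{\mathrm{sup}}(\vartheta_{i,t})\bigr] - \eta_t^{\mathrm{dist}}\lambda_{i,t}^{\mathrm{eff}}\nabla\widehat{\mathcal L}_{i,t}^{\mathrm{distil}}(\theta_{i,t+\frac12}).
\]
The two trajectories see the same supervised mini-batch at round $t$, so both gradients come from the same function $\widehat{\mathcal L}_{i,t}^{\mathrm{sup}}$. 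I then apply the triangle inequality to the right-hand side. Hypothesis (i) bounds the gradient-difference term by $\eta_t^{\mathrm{sup}}\beta_{i,t}e_t$. Hypothesis (ii), applied at the point $\theta_{i,t+\frac12}$, bounds the distillation term by $\eta_t^{\mathrm{dist}}\lambda_{i,t}^{\mathrm{eff}}G_{i,t}$; here I use $\lambda_{i,t}^{\mathrm{eff}}\ge 0$, which holds because accuracies are nonnegative and $\lambda_{\mathrm{distil}}\ge 0$. When $\mathcal B^\star_{i,t}$ is empty the distillation gradient is zero by convention, so the bound holds trivially.

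Unrolling the recursion from $e_0=0$ gives
\[
e_T \le \sum_{t<T}\Bigl(\prod_{s=t+1}^{T-1}(1+\eta_s^{\mathrm{sup}}\beta_{i,s})\Bigr)\eta_t^{\mathrm{dist}}\lambda_{i,t}^{\mathrm{eff}}G_{i,t}.
\]
I bound each product by $\exp\bigl(\sum_{s=0}^{T-1}\eta_s^{\mathrm{sup}}\beta_{i,s}\bigr)$, which is valid because every factor is at least $1$, so extending the range of the product can only increase it. Factoring out this common exponential yields \eqref{eq:self-stability-main}.

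The only delicate point is that hypothesis (ii) must be applied at the intermediate iterate $\theta_{i,t+\frac12}$, not at $\theta_{i,t}$ or $\vartheta_{i,t}$. So I must check that the region in (ii) contains the half-steps of the LNTrust trajectory. I will read "a region containing both trajectories up to round $t$" as including these half-iterates, and state this reading explicitly. Otherwise the argument is routine; the Lipschitz bound in (i) needs no convexity, only the triangle inequality.
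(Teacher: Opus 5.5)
Your proposal is correct and matches the paper's proof step for step: the paper uses the same one-step recursion $\|\delta_{i,t+1}\|\le(1+\eta_t^{\mathrm{sup}}\beta_{i,t})\|\delta_{i,t}\|+\eta_t^{\mathrm{dist}}\lambda_{i,t}^{\mathrm{eff}}G_{i,t}$, unrolls it from $\delta_{i,0}=0$, and collapses the product with $1+a\le e^a$. Your explicit remarks that (ii) must cover the half-iterate $\theta_{i,t+\frac12}$ and that $\lambda_{i,t}^{\mathrm{eff}}\ge 0$ state precisely what the paper leaves implicit.
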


\begin{proof}
Set $\delta_{i,t}:=\theta_{i,t}-\vartheta_{i,t}$ and $\delta_{i,0}=0$.
Subtracting \eqref{eq:theory-self-update} from
\eqref{eq:theory-actual-update-sup} and using assumption~(i),
\[
\|\theta_{i,t+\frac12}-\vartheta_{i,t+1}\|
=
\bigl\|\delta_{i,t}
- \eta_t^{\mathrm{sup}}\bigl(
\nabla\widehat{\mathcal L}_{i,t}^{\mathrm{sup}}(\theta_{i,t})
-\nabla\widehat{\mathcal L}_{i,t}^{\mathrm{sup}}(\vartheta_{i,t})\bigr)\bigr\|
\le \bigl(1+\eta_t^{\mathrm{sup}}\beta_{i,t}\bigr)\|\delta_{i,t}\|.
\]
Adding the distillation step and using assumption~(ii) gives
\[
\|\delta_{i,t+1}\|
\le
\bigl(1+\eta_t^{\mathrm{sup}}\beta_{i,t}\bigr)\|\delta_{i,t}\|
+ \eta_t^{\mathrm{dist}}\lambda_{i,t}^{\mathrm{eff}} G_{i,t}.
\]
Iterating from $\delta_{i,0}=0$ and using $1+a\le e^a$ for $a\ge 0$ to
collapse the product yields \eqref{eq:self-stability-main}.
\end{proof}

\subsection{Gate selectivity}
\label{app:gate}

Proposition~\ref{prop:self-stability} bounds the drift in terms of
$\lambda_{i,t}^{\mathrm{eff}}$. We now show that, at the population level,
the oracle gate is itself monotone in the peer ensemble's excess error on
$P_i$, and that the empirical gate tracks it under validation noise.

Define the population accuracies
\begin{equation}
A_{i,t}^{\mathrm{self}}
:= \textstyle\sum_{c=1}^C w_i^c\,\mathbb P(\hat y_{i,t}(X)=c\mid Y=c),
\qquad
A_{i,t}^{\mathrm{peer}}
:= \textstyle\sum_{c=1}^C w_i^c\,\mathbb P(\hat y_{i,t}^{\mathrm{ens}}(X)=c\mid Y=c),
\label{eq:pop-acc}
\end{equation}
with $(X,Y)\sim P_i$. The peer excess risk is
\begin{equation}
\delta_{i,t}^{\mathrm{peer}}
:= (1-A_{i,t}^{\mathrm{peer}})-(1-A_{i,t}^{\mathrm{self}})
= A_{i,t}^{\mathrm{self}}-A_{i,t}^{\mathrm{peer}},
\label{eq:peer-excess-risk}
\end{equation}
and the oracle gate is
\begin{equation}
\bar\lambda_{i,t}
:= \lambda_{\mathrm{distil}}\cdot
\min\!\Big(1,\ \tfrac{A_{i,t}^{\mathrm{peer}}}{A_{i,t}^{\mathrm{self}}+\epsilon}\Big).
\label{eq:oracle-gate}
\end{equation}

\begin{lemma}[Oracle gate selectivity]
\label{lem:gate_selectivity}
For fixed $A_{i,t}^{\mathrm{self}}$, the map
$\delta_{i,t}^{\mathrm{peer}}\mapsto \bar\lambda_{i,t}$ is monotone
nonincreasing.
\end{lemma}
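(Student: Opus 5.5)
The plan is to rewrite the oracle gate~\eqref{eq:oracle-gate} as an explicit function of $\delta_{i,t}^{\mathrm{peer}}$ with $A_{i,t}^{\mathrm{self}}$ held fixed, and then check that this function is a composition of nonincreasing and nondecreasing maps. By~\eqref{eq:peer-excess-risk}, $A_{i,t}^{\mathrm{peer}} = A_{i,t}^{\mathrm{self}} - \delta_{i,t}^{\mathrm{peer}}$. Writing $a := A_{i,t}^{\mathrm{self}}$ and $\delta := \delta_{i,t}^{\mathrm{peer}}$, we get
\[
\bar\lambda_{i,t} = \lambda_{\mathrm{distil}}\cdot \min\!\Big(1,\ \tfrac{a-\delta}{a+\epsilon}\Big) =: f(\delta).
\]
Here $\epsilon>0$ is the stabilizing constant, so the denominator $a+\epsilon$ is strictly positive, because $a\ge 0$ as a convex combination of probabilities.

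The key steps, in order, are as follows.
\begin{enumerate}
\item The inner map $\delta\mapsto (a-\delta)/(a+\epsilon)$ is affine with slope $-1/(a+\epsilon)<0$. It is therefore strictly decreasing.
\item The map $u\mapsto \min(1,u)$ is nondecreasing. Composing a nondecreasing function with a decreasing one yields a nonincreasing function of $\delta$.
\item Multiplying by $\lambda_{\mathrm{distil}}\ge 0$ preserves monotonicity. I take the hyperparameter to be nonnegative, as is implicit in its role as a distillation weight.
\end{enumerate}
Formally, for $\delta\le\delta'$ one has $(a-\delta)/(a+\epsilon)\ge(a-\delta')/(a+\epsilon)$, hence $\min(1,\cdot)$ of the former is at least that of the latter, and so $f(\delta)\ge f(\delta')$.

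The main obstacle is the domain of $\delta$, not the monotonicity itself. With $a$ fixed, the feasible range is $\delta\in[a-1,a]$, since $A_{i,t}^{\mathrm{peer}}\in[0,1]$. The argument above holds pointwise on any subset of the real line, so the domain restriction is harmless. It is still worth remarking that $f$ is constant, equal to $\lambda_{\mathrm{distil}}$, for $\delta\le-\epsilon$ and strictly decreasing for $\delta>-\epsilon$. This shows the gate is saturated exactly when peers beat self by at least $\epsilon$, and that "nonincreasing" cannot be strengthened to "strictly decreasing" everywhere.
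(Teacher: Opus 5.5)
Your proof is correct and uses the same argument as the paper: substitute $A_{i,t}^{\mathrm{peer}} = A_{i,t}^{\mathrm{self}} - \delta_{i,t}^{\mathrm{peer}}$ into the oracle gate and read off monotonicity. You make explicit what the paper leaves implicit (that $A_{i,t}^{\mathrm{self}}+\epsilon>0$, that $\min(1,\cdot)$ is nondecreasing, and that $\lambda_{\mathrm{distil}}\ge 0$); your side remark on strict decrease for $\delta>-\epsilon$ additionally needs $\lambda_{\mathrm{distil}}>0$, but that goes beyond what the lemma claims.
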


\begin{proof}
Substituting
$A_{i,t}^{\mathrm{peer}}=A_{i,t}^{\mathrm{self}}-\delta_{i,t}^{\mathrm{peer}}$
from \eqref{eq:peer-excess-risk} into \eqref{eq:oracle-gate} makes
$\bar\lambda_{i,t}$ a nonincreasing function of
$\delta_{i,t}^{\mathrm{peer}}$ for fixed $A_{i,t}^{\mathrm{self}}$.
\end{proof}

Let $\widehat A_{i,t}^{\mathrm{self}}$ and $\widehat A_{i,t}^{\mathrm{peer}}$
denote the empirical weighted accuracies on
$\mathcal D_i^{\mathrm{val}}$, with corresponding empirical gate
\begin{equation}
\widehat\lambda_{i,t}
:= \lambda_{\mathrm{distil}}\cdot
\min\!\Big(1,\ \tfrac{\widehat A_{i,t}^{\mathrm{peer}}}{\widehat A_{i,t}^{\mathrm{self}}+\epsilon}\Big).
\label{eq:emp-gate}
\end{equation}

\begin{proposition}[Empirical gate tracks oracle gate]
\label{prop:gate_empirical}
Suppose $\widehat A_{i,t}^{\mathrm{self}}$ and
$\widehat A_{i,t}^{\mathrm{peer}}$ are unbiased empirical averages of
$[0,1]$-valued random variables over
$m_i := |\mathcal D_i^{\mathrm{val}}|$ examples. Then
\[
\mathbb E\bigl[|\widehat\lambda_{i,t}-\bar\lambda_{i,t}|\bigr]
\;\le\;
L_\epsilon
\Bigl(
\mathbb E\bigl[|\widehat A_{i,t}^{\mathrm{self}}-A_{i,t}^{\mathrm{self}}|\bigr]
+ \mathbb E\bigl[|\widehat A_{i,t}^{\mathrm{peer}}-A_{i,t}^{\mathrm{peer}}|\bigr]
\Bigr),
\qquad
L_\epsilon = \lambda_{\mathrm{distil}}\bigl(\tfrac{1}{\epsilon}+\tfrac{1}{\epsilon^2}\bigr).
\]
\end{proposition}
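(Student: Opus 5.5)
The argument is a deterministic Lipschitz bound on the gate map, followed by taking expectations. Define $F(a,b):=\lambda_{\mathrm{distil}}\min\bigl(1,\,b/(a+\epsilon)\bigr)$ on $a\in[0,1]$, $b\in[0,1]$. Then
\[
\widehat\lambda_{i,t}=F\bigl(\widehat A_{i,t}^{\mathrm{self}},\widehat A_{i,t}^{\mathrm{peer}}\bigr),
\qquad
\bar\lambda_{i,t}=F\bigl(A_{i,t}^{\mathrm{self}},A_{i,t}^{\mathrm{peer}}\bigr).
\]
All four arguments lie in $[0,1]$, since both the empirical and population weighted accuracies are convex combinations (weights $w_i^c$) of probabilities. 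It therefore suffices to show that $F$ is Lipschitz on $[0,1]^2$ with respect to the $\ell_1$ norm, with constant $L_\epsilon$. Taking expectations of the pointwise inequality then gives the claim. Unbiasedness is not actually needed beyond guaranteeing that these expectations are well defined; boundedness is enough.

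First, $u\mapsto\min(1,u)$ is $1$-Lipschitz, so $|F(a,b)-F(a',b')|\le\lambda_{\mathrm{distil}}\,\bigl|b/(a+\epsilon)-b'/(a'+\epsilon)\bigr|$. Next, split the ratio difference through an intermediate term:
\[
\Bigl|\tfrac{b}{a+\epsilon}-\tfrac{b'}{a'+\epsilon}\Bigr|
\le \tfrac{|b-b'|}{a+\epsilon} + b'\,\Bigl|\tfrac{1}{a+\epsilon}-\tfrac{1}{a'+\epsilon}\Bigr|
\le \tfrac{|b-b'|}{\epsilon} + \tfrac{|a-a'|}{(a+\epsilon)(a'+\epsilon)}
\le \tfrac{|b-b'|}{\epsilon} + \tfrac{|a-a'|}{\epsilon^2}.
\]
This uses $a,a'\ge 0$ and $b'\le 1$. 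Both coefficients are at most $\tfrac1\epsilon+\tfrac1{\epsilon^2}$, which gives
\[
|F(a,b)-F(a',b')|\le L_\epsilon\bigl(|a-a'|+|b-b'|\bigr).
\]
Substituting $a=\widehat A^{\mathrm{self}}$, $a'=A^{\mathrm{self}}$, $b=\widehat A^{\mathrm{peer}}$, $b'=A^{\mathrm{peer}}$ and taking expectations by linearity and monotonicity finishes the proof.

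The only point needing care is the step that bounds the peer-accuracy factor by $1$ and the denominators below by $\epsilon$. This requires the accuracies to lie in $[0,1]$, including the empirical ones, which holds because they are averages of $[0,1]$-valued variables as assumed. If desired, one can add a remark that Hoeffding gives $\mathbb E|\widehat A-A|\le\sqrt{\pi/(2m_i)}$, so the gate error is $O\bigl(L_\epsilon/\sqrt{m_i}\bigr)$. That refinement is not part of the stated bound.
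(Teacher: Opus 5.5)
Your proposal is correct and follows the paper's route: an $\ell_1$-Lipschitz bound on the gate map $(a,b)\mapsto\lambda_{\mathrm{distil}}\min\bigl(1,b/(a+\epsilon)\bigr)$, with coefficients $1/\epsilon$ and $1/\epsilon^2$, followed by taking expectations. Your two-point estimate using the $1$-Lipschitzness of $\min(1,\cdot)$ is a cleaner way to handle the kink than the paper's partial-derivative bounds ``on the region where the $\min$ is active.''
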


\begin{proof}
The map $f(u,v) := \lambda_{\mathrm{distil}}\min(1, u/(v+\epsilon))$ on
$[0,1]^2$ is Lipschitz: $\partial_u f \le \lambda_{\mathrm{distil}}/\epsilon$
and $|\partial_v f|\le \lambda_{\mathrm{distil}}/\epsilon^2$ on the region
where the $\min$ is active, so the joint Lipschitz constant is bounded by
$L_\epsilon$. Apply the bound to
$f(\widehat A^{\mathrm{peer}},\widehat A^{\mathrm{self}})$ versus
$f(A^{\mathrm{peer}},A^{\mathrm{self}})$ and take expectations.
\end{proof}

\subsection{Gradient bound for the KL distillation loss}
\label{app:grad_bound}

\begin{proposition}[Distillation gradient bound]
\label{prop:distil-grad}
Fix a round $t$ and suppose $\|\nabla_{\theta_i} h_{\theta_i}(x)\|_{\mathrm{op}}
\le M_{i,t}$ for every $x\in\mathcal B_{i,t}^\star$ on the parameter region
visited at round $t$. Then
$\bigl\|\nabla\widehat{\mathcal L}_{i,t}^{\mathrm{distil}}(\theta_i)\bigr\|
\le C\sqrt{2}\, M_{i,t}$;
the left-hand side is zero if $\mathcal B_{i,t}^\star=\varnothing$.
\end{proposition}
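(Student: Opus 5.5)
The plan is to differentiate the soft distillation loss directly through the logits and then apply the chain rule with the assumed operator-norm bound on the Jacobian $\nabla_{\theta_i}h_{\theta_i}(x)$. The case $\mathcal B_{i,t}^\star=\varnothing$ is trivial, since the loss is then identically zero by convention. Otherwise, write $z=h_{\theta_i}(x)$ and $s=\mathrm{softmax}(z)$. Since $\bar p_i(x)$ does not depend on $\theta_i$ (it is formed from neighbor outputs), the per-example loss is $\mathrm{KL}(\bar p_i(x)\,\|\,s)$. Its gradient in the logits is the standard identity
\[
\nabla_z\,\mathrm{KL}(\bar p_i(x)\,\|\,\mathrm{softmax}(z)) = s-\bar p_i(x).
\]
Both $s$ and $\bar p_i(x)$ lie in $\Delta^{C-1}$, which gives $\|s-\bar p_i(x)\|_2\le\|s-\bar p_i(x)\|_1\le 2$. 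The sharper bound $\|s-\bar p\|_2\le\sqrt2$ also holds, because the diameter of the simplex in $\ell_2$ is attained between vertices.

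Next I apply the chain rule per example:
\[
\|\nabla_{\theta_i}\ell_x\|\le \|\nabla_{\theta_i}h_{\theta_i}(x)\|_{\mathrm{op}}\,\|s-\bar p_i(x)\|_2\le \sqrt2\,M_{i,t}.
\]

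The remaining factor comes from the importance weight $\omega_i(x)=C\,w_i^{\hat c(x)}\le C$, using $w_i^c\le1$. Averaging over $\mathcal B_{i,t}^\star$ with the normalization $1/|\mathcal B_{i,t}^\star|$ and applying the triangle inequality gives $\alpha_{\mathrm{KL}}\,C\sqrt2\,M_{i,t}$. Since $\alpha_{\mathrm{KL}}\le1$, this is at most $C\sqrt2\,M_{i,t}$.

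The filter set $\mathcal B_{i,t}^\star$ and the argmax $\hat c(x)$ depend only on $\bar p_i$, not on $\theta_i$. They are therefore constant under differentiation, so no extra terms appear.

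For the hard pseudo-label variant, the same argument applies with $\bar p_i(x)$ replaced by the one-hot vector $e_{\hat c(x)}$. The cross-entropy gradient in the logits is then $s-e_{\hat c(x)}$, whose $\ell_2$ norm is again at most $\sqrt2$. The stated bound therefore covers both instantiations, without the $\alpha_{\mathrm{KL}}$ factor.

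The only step needing care is the $\sqrt2$ diameter claim. I would verify it through
\[
\|a-b\|_2^2=\|a\|^2+\|b\|^2-2\langle a,b\rangle\le 1+1-0=2,
\]
which holds for $a,b\in\Delta^{C-1}$ because $\|a\|_2\le\|a\|_1=1$ and $\langle a,b\rangle\ge0$.
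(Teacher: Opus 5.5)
Your proof is correct and follows the paper's argument: the logit gradient $s-\bar p_i(x)$, the $\sqrt2$ diameter of the simplex, the weight bound $\omega_i(x)\le C$, the Jacobian operator-norm bound, and averaging over $\mathcal B_{i,t}^\star$. Your explicit handling of the factor $\alpha_{\mathrm{KL}}\le 1$ (which the paper's proof silently drops), your remark that $\mathcal B_{i,t}^\star$ and $\hat c(x)$ do not depend on $\theta_i$, and your extension to the hard-label variant are all valid refinements.
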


\begin{proof}
Write $s_{\theta_i}(x) := \mathrm{softmax}(h_{\theta_i}(x))$ and
$q_{i,t}(x) := \bar p_i(x)$. For one retained example
$x\in\mathcal B_{i,t}^\star$,
$\ell_x(\theta_i)
= \omega_i(x)\,\mathrm{KL}(q_{i,t}(x)\,\|\,s_{\theta_i}(x))$
has gradient
$\nabla_{\theta_i}\ell_x(\theta_i)
= \omega_i(x)\,J_{\theta_i}(x)^\top\bigl(s_{\theta_i}(x)-q_{i,t}(x)\bigr)$,
where $J_{\theta_i}(x):=\nabla_{\theta_i}h_{\theta_i}(x)$. Both $s_{\theta_i}(x)$
and $q_{i,t}(x)$ lie in the simplex, so
$\|s_{\theta_i}(x)-q_{i,t}(x)\|_2\le\sqrt{2}$; and
$\omega_i(x)=C\,w_i^{\hat c(x)}\le C$ by~\eqref{eq:iw}. Hence
$\|\nabla_{\theta_i}\ell_x(\theta_i)\|\le C\sqrt{2}\,M_{i,t}$. The bound
is preserved by averaging over $\mathcal B_{i,t}^\star$.
\end{proof}

\section{Additional Experiments and Ablations}
\label{app:additional_experiments}

\paragraph{Validation set size.}
Table~\ref{tab:ablation_val} varies the fraction of each node's labeled
data reserved for validation (trust estimation and deployment gating)
versus training.
Larger validation sets improve trust estimation (the deploy gain rises
from $+3.5$ pp at $5\%$ to $+9.5$ pp at $20\%$), but reduce the
training set, lowering Stage~1 accuracy.
At $5\%$, nodes have only ${\sim}50$ validation examples on average,
too few for reliable probe-based trust features; the trust model
defaults to high self-weight and the deployment ensemble barely
improves over self.
The optimum is $20\%$: the trust model has sufficient validation
signal to learn discriminative neighbor weights, and the training set
remains large enough for strong Stage~1 models.
At $30\%$, the marginal trust improvement no longer compensates for
the reduced training data.

\begin{table}[htbp]
\centering
\caption{
  Validation fraction ablation (sparse heterogeneous, CIFAR-10,
  BA $m{=}2$, $p{=}0.1$, mixed architectures, 3 seeds).
  Deploy gain = Test $-$ Stage~1.
}
\label{tab:ablation_val}
\setlength{\tabcolsep}{4pt}
\begin{tabular}{rccc}
\toprule
\textbf{Val frac.} & \textbf{Test} & \textbf{Self} & \textbf{Deploy gain} \\
\midrule
$0.05$
  & $0.822 \pm 0.012$ & $0.811 \pm 0.008$ & $+0.035$ \\
$0.10$
  & $0.841 \pm 0.020$ & $0.819 \pm 0.012$ & $+0.057$ \\
\rowcolor{ours}
$0.20$
  & $\mathbf{0.865 \pm 0.017}$ & $\mathbf{0.829 \pm 0.006}$ & $\mathbf{+0.095}$ \\
$0.30$
  & $0.855 \pm 0.018$ & $0.825 \pm 0.005$ & $+0.090$ \\
\bottomrule
\end{tabular}
\end{table}

\paragraph{Trust feature importance.}
Figure~\ref{fig:feat_imp} reports permutation importance for each of
the six trust features, measured as the increase in validation cross-entropy
when that feature is randomly shuffled across candidate neighbors
(averaged over all nodes and 10 repetitions per seed).

The six features cluster into three functional groups.
\emph{Direct quality signals} (weighted probe accuracy and unweighted
probe accuracy) are computed from each neighbor's predictions on the
evaluating node's validation set with ground-truth labels, and
together account for the largest share of importance.
Weighted probe accuracy ($\sum_c w_i^c \rho_{ij}^c$) is the single
most informative feature because it measures neighbor quality on
exactly the classes the evaluating node cares about, rather than
treating all classes equally.
\emph{Distribution inference signals} (KL divergence, overlap, and
neighbor specialization entropy) are inferred from neighbor
predictions on the shared IID pool without access to labels.
KL divergence ($\mathrm{KL}(w_i \| \hat w_j)$) is the second most important
feature overall, confirming that distribution mismatch is a strong
predictor of neighbor utility even when measured indirectly.
\emph{Structural signals} (neighbor degree $d_j/n$) capture hub
connectivity: high-degree neighbors have absorbed broader knowledge
through distillation and are more likely to be useful teachers.

\begin{figure}[htbp]
\centering
\includegraphics[width=\linewidth]{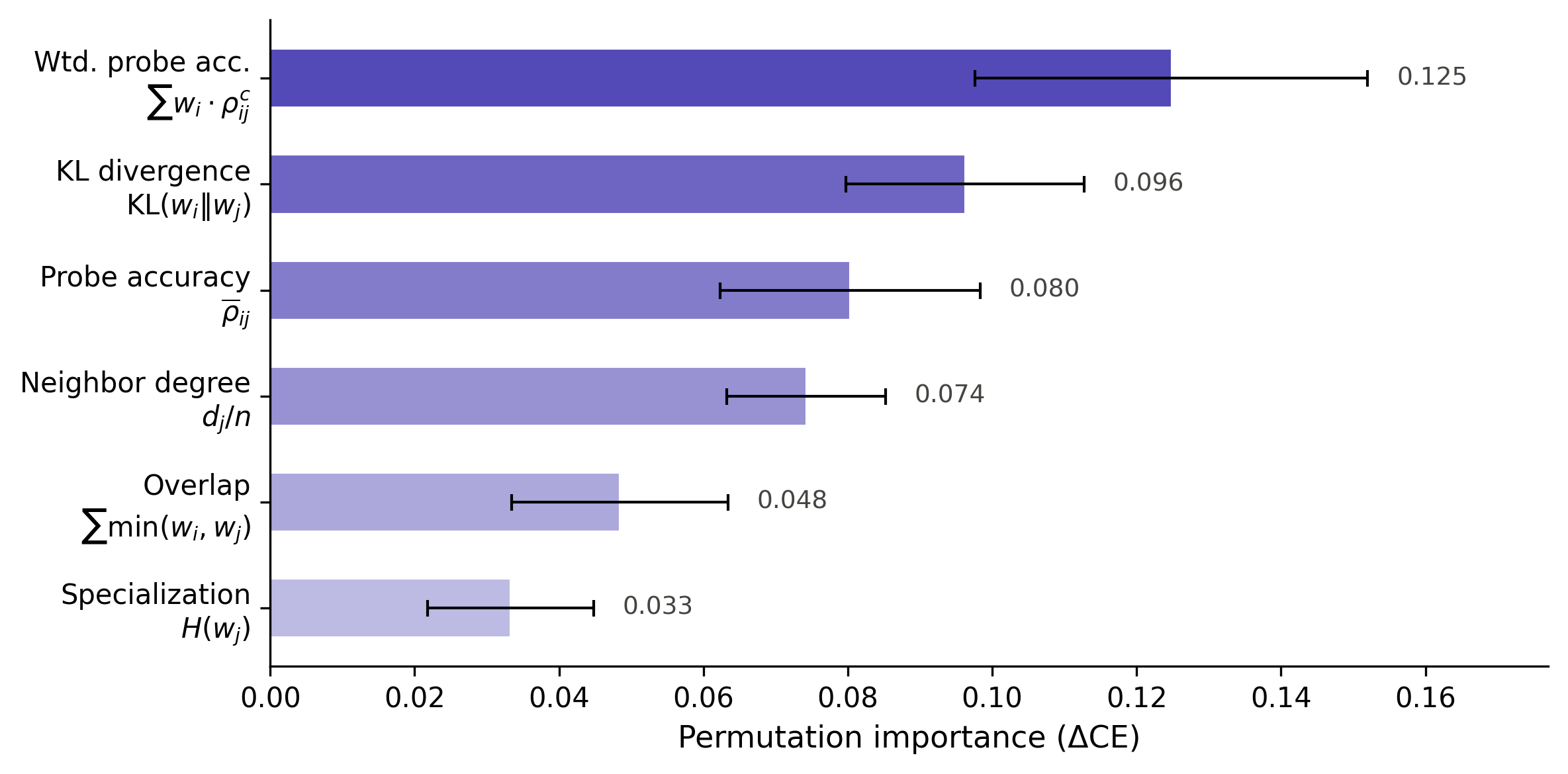}
\caption{
  Permutation importance of the six trust features (sparse
  heterogeneous setting, CIFAR-10, seed~0, round~200).
  Bars show mean $\Delta$CE when each feature is shuffled across
  candidate neighbors; error bars show standard deviation over 10
  repetitions.
  Direct quality signals (probe accuracy variants) and distribution
  inference (KL divergence) dominate, with structural information
  (neighbor degree) providing complementary signal.
}
\label{fig:feat_imp}
\end{figure}

\smallskip
\noindent\textbf{Pseudo-label budget ablation.}
Table~\ref{tab:ablation_pseudo} decomposes the two LNTrust mechanisms.
With $B{=}0$ pseudo-labels, deployment-time ensembling alone reaches
$0.843$, a $5.2$-point gain over Independent Learning ($0.791$) even
though the per-node classifier (\textbf{Self}$=0.777$) is weaker than
the independent baseline, a direct consequence of withholding labeled
examples for validation, the cost LNTrust pays to enable the ensemble.
The deployment ensemble more than recoups this cost without any unlabeled
data or distillation. Distillation then compounds the gain non-monotonically:
$B$ from $0{\to}500{\to}1000$ lifts test accuracy to $0.865$ and Self to
$0.829$, but $B{=}1500$ reverses both, consistent with peer pseudo-labels
overwriting locally informative signal past a saturation point. The
$^\dagger$ row, which removes the unlabeled pool and reduces trust features
to probe accuracy and neighbor degree, matches the $B{=}0$ row at $0.842$:
the pool's role is specifically to power distillation, and LNTrust's
deployment component runs unchanged in settings where no shared unlabeled
data is available.

\begin{table}[htbp]
\centering
\caption{
  Distillation budget ablation (sparse heterogeneous, CIFAR-10,
  BA $m{=}2$, $p{=}0.1$, mixed architectures, 3 seeds).
  Independent Learning (no communication) shown for reference.
  $^\dagger$Unlabeled pool removed entirely; trust features rely
  only on probe accuracy and neighbor degree.
}
\label{tab:ablation_pseudo}
\setlength{\tabcolsep}{4pt}
\begin{tabular}{rcc}
\toprule
\textbf{Pseudo/rnd} & \textbf{Test} & \textbf{Self} \\
\midrule
\multicolumn{3}{l}{\textit{No collaboration}} \\
---
  & $0.791 \pm 0.022$ & $0.791 \pm 0.022$ \\
\midrule
\multicolumn{3}{l}{\textit{LNTrust (trust ensemble, varying distillation)}} \\
0
  & $0.843 \pm 0.023$ & $0.777 \pm 0.012$ \\
500
  & $0.853 \pm 0.020$ & $0.822 \pm 0.007$ \\
\rowcolor{ours}
1000
  & $\mathbf{0.865 \pm 0.017}$ & $\mathbf{0.829 \pm 0.006}$ \\
1500
  & $0.855 \pm 0.023$ & $0.820 \pm 0.004$ \\
\midrule
\multicolumn{3}{l}{\textit{LNTrust (no unlabeled pool, no distillation)}} \\
0$^\dagger$
  & $0.842 \pm 0.026$ & $0.779 \pm 0.007$ \\
\bottomrule
\end{tabular}
\end{table}

\paragraph{Trust update frequency.}
Table~\ref{tab:ablation_freq} varies the trust update frequency $F$:
the number of distillation rounds between consecutive
reprobe-and-retrain cycles for the per-node trust models.
All runs in this ablation use $\text{val\_fraction}{=}0.05$ for
internal comparability; the main results
(Tables~\ref{tab:agnostic_skewed}--\ref{tab:dense_homo}) use $F{=}10$
with $\text{val\_fraction}{=}0.20$.

Performance is stable across $F \in \{1, 5, 10, 15\}$, with no
significant differences in deployed test accuracy ($0.823$--$0.831$).
$F{=}10$ achieves the highest test accuracy, though the margin over
$F{=}5$ is within one standard deviation.
This robustness is notable because $F$ controls the staleness of the
trust model's training signal: at $F{=}15$ the trust model operates
on probe responses that are 15 rounds old, yet achieves the same
accuracy as $F{=}1$.
At the other extreme, $F{=}1$ retrains the trust model every round,
creating a tight feedback loop between trust weights and model
updates.
Each round, the trust model is retrained on probe responses from
models that were themselves shaped by the previous trust weights'
distillation signal.
This circular dependency is structurally analogous to the instability
that motivates target networks in DQN~\cite{mnih2015humanlevel}: in
both cases, the optimization target (Q-values in DQN, trust-weighted
ensemble quality here) co-evolves with the model being trained, and
updating both simultaneously can produce oscillatory dynamics.
In our setting, however, the trust gate provides a natural
stabilizer: even if trust weights oscillate between reprobe cycles,
the distribution-weighted gate comparison suppresses distillation
whenever the ensemble degrades, preventing the feedback loop from
amplifying harmful updates.
Combined with the low capacity of the trust MLP (6 features, 32
hidden units), this limits the potential for overfitting to
transient probe signals.

The practical implication is that $F$ can be chosen primarily on
communication grounds, since higher $F$ amortizes probe communication
cost over more rounds without sacrificing accuracy.

\begin{table}[htbp]
\centering
\caption{
  Trust update frequency ablation (sparse heterogeneous, CIFAR-10,
  BA $m{=}2$, $p{=}0.1$, mixed architectures, 3 seeds,
  $\text{val\_fraction}{=}0.05$).
  $F$: number of distillation rounds between trust reprobe/retrain
  cycles.
  Stage~1 accuracy is $0.787 \pm 0.013$ for all rows
  (shared pretrained models).
}
\label{tab:ablation_freq}
\setlength{\tabcolsep}{4pt}
\begin{tabular}{rcc}
\toprule
$F$ & \textbf{Test} & \textbf{Self} \\
\midrule
$1$
  & $0.824 \pm 0.011$ & $0.810 \pm 0.010$ \\
$5$
  & $0.825 \pm 0.009$ & $0.815 \pm 0.011$ \\
\rowcolor{ours}
$10$
  & $\mathbf{0.831 \pm 0.009}$ & $\mathbf{0.815 \pm 0.009}$ \\
$15$
  & $0.823 \pm 0.014$ & $0.813 \pm 0.013$ \\
\bottomrule
\end{tabular}
\end{table}

\paragraph{Trust gate.}
Table~\ref{tab:ablation_gate} ablates the trust-gated distillation
mechanism. The gate $\delta_i$
(Eq.~\ref{eq:emp-gate}) compares the trust-weighted ensemble loss to
the local model's loss on $\mathcal D_i^{\mathrm{val}}$ and suppresses
the distillation step whenever the ensemble would degrade node $i$'s
predictor; the rest of the LNTrust pipeline (per-node trust MLP,
six-feature representation, hedge weighting, Stage~2 schedule) is
unchanged in both rows. Both runs share the identical Stage~1
checkpoint per seed, so any difference is attributable to Stage~2
distillation behavior.

In this low-validation regime ($\text{val\_fraction}{=}0.05$),
removing the gate \emph{improves} mean Test accuracy by roughly
1\,pp ($0.835$ vs $0.825$). The Self score moves only marginally
($0.814$ vs $0.812$), so the gain is concentrated in deployment-time
selection rather than in the trained models themselves. We read this
as the gate occasionally vetoing distillation steps that would in
fact have helped, which is the cost of trading variance for safety:
with only $\approx 50$ validation examples per node, the empirical
gate
$\hat\delta_i = \widehat L_{\mathrm{val}}(\bar h^{\mathrm{ens}}_i)
- \widehat L_{\mathrm{val}}(h_{\theta_i})$
is itself noisy and can flip the wrong way for individual rounds.
Proposition~\ref{prop:gate_empirical} bounds the magnitude of these flips
in terms of $|\mathcal D_i^{\mathrm{val}}|$, and the bound is
loosest in exactly this regime.

The gate's value is therefore not a mean-test-accuracy gain but a
bounded-drift safety property. With the gate active, the
controlled-perturbation result of
Proposition~\ref{prop:self-stability} bounds how far Stage~2 can
push each node's parameters from the self-only trajectory it would
have followed without distillation, and the population gate
underlying that bound is monotone nonincreasing in the peer
ensemble's excess risk on $\mu_i$
(Lemma~\ref{lem:gate_selectivity}); the empirical gate tracks the
oracle gate up to validation-noise terms
(Proposition~\ref{prop:gate_empirical}). This is the property we want
when LNTrust is deployed in settings where some neighbors may be
misconfigured, mismatched in distribution, or worse: bounded drift
from local supervised learning is a guarantee about the worst case,
not about the mean. In the benign per-node-skewed CIFAR-10
benchmark used here, none of the 50 nodes is adversarial and most
ensembles are well-calibrated, so the gate's protection is
overinsured. We retain the gate by default in our headline results
because the cost is small (about 1\,pp) and the bounded-drift
property matters for any deployment outside this controlled
benchmark.

\begin{table}[htbp]
\centering
\caption{
  Trust-gate ablation (sparse heterogeneous, CIFAR-10,
  BA $m{=}2$, $p{=}0.1$, mixed architectures, 3 seeds,
  $\text{val\_fraction}{=}0.05$).
  Single-flag difference between rows: \texttt{--trust\_gate} on
  vs.\ off; all other hyperparameters and the Stage~1 checkpoint
  are identical.
  Stage~1 accuracy is $0.787 \pm 0.013$ for both rows
  (shared pretrained models).
}
\label{tab:ablation_gate}
\setlength{\tabcolsep}{4pt}
\begin{tabular}{lcc}
\toprule
\textbf{Configuration} & \textbf{Test} & \textbf{Self} \\
\midrule
\rowcolor{ours}
Gate on (default)
  & $0.825 \pm 0.007$ & $0.812 \pm 0.009$ \\
Gate off
  & $\mathbf{0.835 \pm 0.009}$ & $\mathbf{0.814 \pm 0.006}$ \\
\bottomrule
\end{tabular}
\end{table}

\paragraph{Confidence filter threshold.}
Table~\ref{tab:ablation_tau} varies $\tau_{\mathrm{conf}}$, the margin
above the uniform level $1/C$ used to filter low-confidence pseudo-labels
during distillation. Performance is stable across all tested values,
with no significant differences in deployed test accuracy. This confirms
that the filter's primary role is computational (discarding uninformative
examples) rather than a sensitive tuning decision, and that the default
$\tau_{\mathrm{conf}}{=}0.2$ is a conservative but safe choice.

\begin{table}[htbp]
\centering
\caption{
  Confidence filter ablation (sparse heterogeneous, CIFAR-10,
  BA $m{=}2$, $p{=}0.1$, mixed architectures, $n{=}100$, 3 seeds).
  $\tau_{\mathrm{conf}}{=}0$ disables the filter entirely.
}
\label{tab:ablation_tau}
\setlength{\tabcolsep}{4pt}
\begin{tabular}{rcc}
\toprule
$\tau_{\mathrm{conf}}$ & \textbf{Test} & \textbf{Self} \\
\midrule
$0.0$    & $0.811 \pm 0.007$ & $0.791 \pm 0.009$ \\
$0.1$               & $0.811 \pm 0.006$ & $0.796 \pm 0.007$ \\
\rowcolor{ours}
$0.2$     & $\mathbf{0.812 \pm 0.009}$ & $\mathbf{0.795 \pm 0.006}$ \\
\bottomrule
\end{tabular}
\end{table}

\section{Hyperparameters}
\label{app:hyperparams}

This section reports the exact configuration used for every LNTrust
result in the main paper. Hyperparameters are split by role: Table~%
\ref{tab:hp_env} lists \emph{environmental} parameters that define each
experimental scenario (dataset, graph topology, label skew, number of
nodes), while Table~\ref{tab:hp_learn} lists \emph{learning}
parameters that control optimization, distillation, and the trust
mechanism. Most learning parameters are shared across all four
settings: only the few that genuinely differ are repeated per column.
Throughout this section we abbreviate \textbf{Sp.\ Het.\ C10} =
sparse heterogeneous CIFAR-10 (Table~\ref{tab:agnostic_skewed}),
\textbf{Sp.\ Het.\ C100} = sparse heterogeneous CIFAR-100
(Table~\ref{tab:agnostic_skewed}),
\textbf{Geo.\ ES} = geographic EuroSAT
(Table~\ref{tab:eurosat_geo}), and
\textbf{Dense Hom.} = dense homogeneous CIFAR-10
(Table~\ref{tab:dense_homo}).  We fix
$\tau_{\mathrm{abs}}{=}0.2$ and $\tau_{\mathrm{conf}}{=}0.1$ in all
experiments, so the active threshold is $0.2$ for both CIFAR-10
($1/C{+}\tau_{\mathrm{conf}}{=}0.2$, dominated by neither term) and
CIFAR-100 ($\tau_{\mathrm{abs}}$ dominates over $1/C{+}\tau_{\mathrm{conf}}{=}0.11$).

\begin{table}[htbp]
\centering
\caption{
  Environmental hyperparameters: scenario-defining settings that
  determine the dataset, graph, and node distribution for each
  experiment. Number of nodes ($n{=}50$), number of seeds ($3$), and
  the held-out test fraction per node ($0.15$) are shared across all
  four scenarios. Architecture mix is 90\% MobileNetV2 + 10\%
  EfficientNet-B0 (degree-ordered, hubs get EfficientNet) for the
  three heterogeneous-architecture settings; the dense homogeneous
  setting uses MobileNetV2 only.
}
\label{tab:hp_env}
\setlength{\tabcolsep}{4pt}
\begin{tabular}{lcccc}
\toprule
\textbf{Parameter}
  & \textbf{Sp.\ Het.\ C10}
  & \textbf{Sp.\ Het.\ C100}
  & \textbf{Geo.\ ES}
  & \textbf{Dense Hom.} \\
\midrule
Dataset                       & CIFAR-10        & CIFAR-100       & EuroSAT         & CIFAR-10 \\
\# classes                    & 10              & 100             & 10              & 10 \\
Graph model                   & BA $m{=}2$      & BA $m{=}2$      & BA $m{=}2$ (geo)& Erd\H{o}s--R\'enyi \\
Edge density $p$              & $0.10$          & $0.10$          & $0.10$          & $0.50$ \\
Architecture mix              & 90/10 mob/efn   & 90/10 mob/efn   & 90/10 mob/efn   & MobileNetV2 only \\
Heterogeneity source          & synthetic skew  & synthetic skew  & geographic      & synthetic skew \\
Skew factor                   & $10$            & $5$             & $-$ (data)      & $10$ \\
Min.\ classes / node          & $2$             & $10$            & $2$             & $2$ \\
Per-node sample size          & $1000$          & $1000$          & $1000$          & $1000$ \\
\bottomrule
\end{tabular}
\end{table}

\begin{table}[htbp]
\centering
\caption{
  Learning hyperparameters for LNTrust. Values shown in the
  ``Common'' column are identical across all four experimental
  scenarios; only parameters that genuinely differ get scenario-
  specific columns. The trust block (lower group) is identical in
  every run, indicating that the trust mechanism does not require
  per-dataset retuning. The pseudo-label weight, soft distillation
  alpha, and supervised steps per round are also shared. Only the
  soft-distillation toggle and a small number of optional regularizers
  vary by scenario.
}
\label{tab:hp_learn}
\setlength{\tabcolsep}{4pt}
\begin{tabular}{lccccc}
\toprule
\textbf{Parameter}
  & \textbf{Common}
  & \textbf{Sp.\ Het.\ C10}
  & \textbf{Sp.\ Het.\ C100}
  & \textbf{Geo.\ ES}
  & \textbf{Dense Hom.} \\
\midrule
\multicolumn{6}{l}{\textit{Optimization}} \\
Stage~1 max rounds            & $50$         &              &              &              &              \\
Stage~2 max rounds            & $200$        &              &              &              &              \\
Sup.\ steps / node / round    & $5$          &              &              &              &              \\
Pseudo warm-up rounds         & $5$          &              &              &              &              \\
Pseudo examples / round       & $1000$       &              &              &              &              \\
Pseudo-label weight           & $-$          & $0.4$        & $0.5$        & $0.5$        & $0.5$        \\
Soft distillation             & $-$          & off          & on           & off          & on           \\
Soft-distil $\alpha$          & $0.3$        &              &              &              &              \\
Head dropout $p$              & $0.5$        &              &              &              &              \\
Validation fraction           & $0.20$       &              &              &              &              \\
\midrule
\multicolumn{6}{l}{\textit{Trust mechanism}} \\
Trust hidden units            & $32$         &              &              &              &              \\
Trust LR                      & $0.01$       &              &              &              &              \\
Trust optimizer steps         & $200$        &              &              &              &              \\
Trust update freq.\ ($F$)     & $10$         &              &              &              &              \\
Trust deployment gate         & $-$          & on           & on           & on           & off          \\
Val.\ optimizer steps         & $300$        &              &              &              &              \\
Val.\ optimizer $\tau$        & $0.01$       &              &              &              &              \\
\bottomrule
\end{tabular}
\end{table}

\paragraph{What the tables make explicit.}
The trust-mechanism block is constant across all four scenarios:
trust hidden width, trust learning rate, trust optimizer step count,
and trust update frequency $F$ are unchanged from CIFAR-10 to
CIFAR-100 to EuroSAT to the dense homogeneous setting. We did not
retune the trust head per dataset. The same observation applies to
the supervised-step budget, the pseudo-label weight, and the
soft-distillation $\alpha$. The only learning parameters that change
between scenarios are two small qualitative toggles
(\textit{soft distillation} and \textit{trust deployment gate}),
selected once per scenario and held fixed across seeds. All
environmental parameters in Table~\ref{tab:hp_env} are properties of
the experimental setup (graph density, dataset, label skew) rather
than tunable knobs of the method.


\section{Compute Resources}
\label{app:compute}

All experiments were run on a single-GPU-per-run basis on an internal
cluster. DESA baselines were trained on NVIDIA RTX A6000 GPUs (48\,GB)
because their anchor-synthesis step has a larger memory footprint; all
other methods (LNTrust, DML, FedPAE, Mean Teacher, Independent
Learning, D-PSGD, Gossip-FedAvg, DecDiff-VT) were trained on NVIDIA A40
GPUs (48\,GB). A full 200-round simulation of $n{=}50$ nodes averaged
over 3 seeds takes approximately 5 minutes on a single A40. Each
reported table entry is the mean over these 3 seeds.

For experimental efficiency, we freeze backbones (MobileNetV2 and
EfficientNet-B0) at ImageNet-pretrained weights and precompute features
once per dataset and architecture, caching them to disk; each node then
trains only its classifier head during simulation. This is an
experimental choice that applies uniformly to all methods and does not
affect the protocol or the method itself, which is defined independently
of whether backbones are trained or frozen. The precomputation script is
included in the released code to support reproduction.


\section{Broader Impacts}
\label{app:broader_impacts}

LNTrust studies how nodes in a decentralized system can collaborate
without sharing parameters, gradients, or raw data. Both the positive
and negative societal impacts follow from this protocol rather than
from the specific trust-estimation algorithm.

\paragraph{Potential positive impacts.}
The method enables useful collaboration in settings where parameter
sharing is infeasible due to architectural heterogeneity,
communication constraints, or institutional boundaries that forbid
weight transfer. Examples include cross-institutional medical
modeling, edge-device learning across manufacturers, and
geographically distributed sensing where nodes run different hardware
or operate under different data-governance regimes. In these
settings, a trust-weighted output-only protocol lets each node
improve without requiring a common model architecture or a central
coordinator, which can lower the practical barrier to collaboration.

\paragraph{Potential negative impacts.}
Three risks deserve explicit mention. First, repeated black-box
querying reveals information about a neighbor's predictor through its
outputs; an adversarial querier could plausibly use probe responses
to reconstruct aspects of a neighbor's decision boundary or training
distribution, so any deployment should account for query-based
leakage and consider rate limits or privacy-preserving response
mechanisms. Second, the same protocol that enables beneficial
collaboration could be applied to distributed surveillance or
automated decision systems whose outputs affect people; the technical
contribution itself is neutral to application domain, and we do not
address those deployment contexts here. Third, LNTrust concentrates
weight on the neighbors whose predictions best match each node's
local validation distribution, which can leave
minority-distribution nodes under-served when no well-aligned
neighbor exists on the graph. This is a fairness consideration that
matters whenever the graph-wide distribution is itself skewed.

These risks motivate cautious deployment and further analysis
(formal privacy accounting for repeated queries, robustness to
adversarial or faulty neighbors, and per-node fairness auditing)
before use in high-stakes settings.

\section{Limitations}
\label{app:limitations}

The main paper notes limitations briefly at the end of Section~\ref{sec:conclusion}. Here
we expand on them.

\paragraph{Protocol assumptions.}
LNTrust assumes synchronous communication rounds and an undirected,
static graph. Asynchronous updates, churn (nodes joining or leaving
mid-training), and directional communication links are not modeled,
and the controlled-perturbation bound in
Proposition~\ref{prop:self-stability} assumes supervised and
distillation steps alternate in the order specified in
\eqref{eq:theory-actual-update}. The protocol also assumes that
neighbors respond truthfully to queries; we do not analyze Byzantine
neighbors that return adversarially crafted soft predictions. We generally assume the neighborhoods are not sufficiently degenerate with respect to class coverage. If a node does not have any neighbor that can collaborate; collaboration does not help.

\paragraph{Validation-data requirement.}
Each node must withhold a portion of its labeled budget for
validation, and the trust model's quality degrades when that budget
is small (Table~\ref{tab:ablation_val}). Nodes with very few labeled
examples may not be able to afford the split at all, in which case
the deployment gate reduces to near-uniform self-weight and the
method provides little benefit over independent learning. The
$O(\sqrt{\log|\bar{\mathcal N}(i)|/m_i})$ rate in
Proposition~\ref{thm:deploy_oracle} makes this dependence explicit.

\paragraph{Inference-time cost.}
Deployment requires each node to query all neighbors in its closed
neighborhood at inference time and combine their soft predictions.
For latency-sensitive applications this may be prohibitive, and we do
not study approximations (e.g., top-$k$ trust pruning at deployment,
or amortizing queries across batches) beyond the single
\texttt{top\_k\_teachers} ablation reported in the code release. The
top-$k$ option is available but off by default in our headline
results.

\paragraph{Experimental scope.}
Our experiments cover three vision datasets (CIFAR-10, CIFAR-100,
EuroSAT), two architectures (MobileNetV2 and EfficientNet-B0), three
graph families (sparse BA, dense Erd\H{o}s--R\'enyi, geographic BA),
and $n{=}50$ nodes. We include an ablation where $n=100$, but do not evaluate at larger scales ($n$ in thousands), on non-image modalities (text, tabular,
time-series), or with architectures that cannot share a common
output space (e.g., different label sets across nodes). The
backbone-caching optimization described in
Appendix~\ref{app:compute} means our runs also do not stress-test
end-to-end backbone training; we expect the qualitative conclusions
to carry over but have not verified this empirically.

\paragraph{Theory scope.}
The theoretical guarantees are local, not global. Proposition~%
\ref{thm:deploy_oracle} bounds deployment-weight estimation error
for fixed predictors, and
Propositions~\ref{prop:self-stability}--\ref{prop:distil-grad}
together bound how far Stage~2 drifts from self-training. Neither
result certifies that collaboration improves over self-training
end-to-end; that question is settled empirically in
Section~\ref{sec:experiments}, not proved.

\paragraph{Privacy and robustness.}
We do not provide formal privacy guarantees for the query protocol.
Soft predictions leak information about a neighbor's decision
boundary and training distribution, and repeated querying likely
compounds this leakage, but we do not quantify it via differential
privacy accounting or membership-inference analysis. Similarly,
robustness to adversarial neighbors (who may return crafted logits
to manipulate trust weights) is outside the scope of this work.
Both are natural directions for follow-up.

\section{Topology Visualizations}
\label{app:topology}

This section visualizes the two other graph topologies used in the main
experiments (the third is given in Figure 2. Each figure shows seed~0; degree statistics reported in the
main text are averaged over 3 seeds.

\paragraph{Sparse heterogeneous topology (CIFAR).}
Figure~\ref{fig:ba_topology} shows the Barabasi--Albert graph
($m{=}2$, $p{=}0.1$, 122 edges) used in the sparse heterogeneous
setting (Table~\ref{tab:agnostic_skewed}). The scale-free degree
distribution produces a small number of hubs and a long tail of
degree-2 leaves; hubs are assigned EfficientNet-B0 (stars) and the
remaining nodes MobileNetV2 (circles).

\begin{figure}[htbp]
\centering
\includegraphics[width=0.85\linewidth]{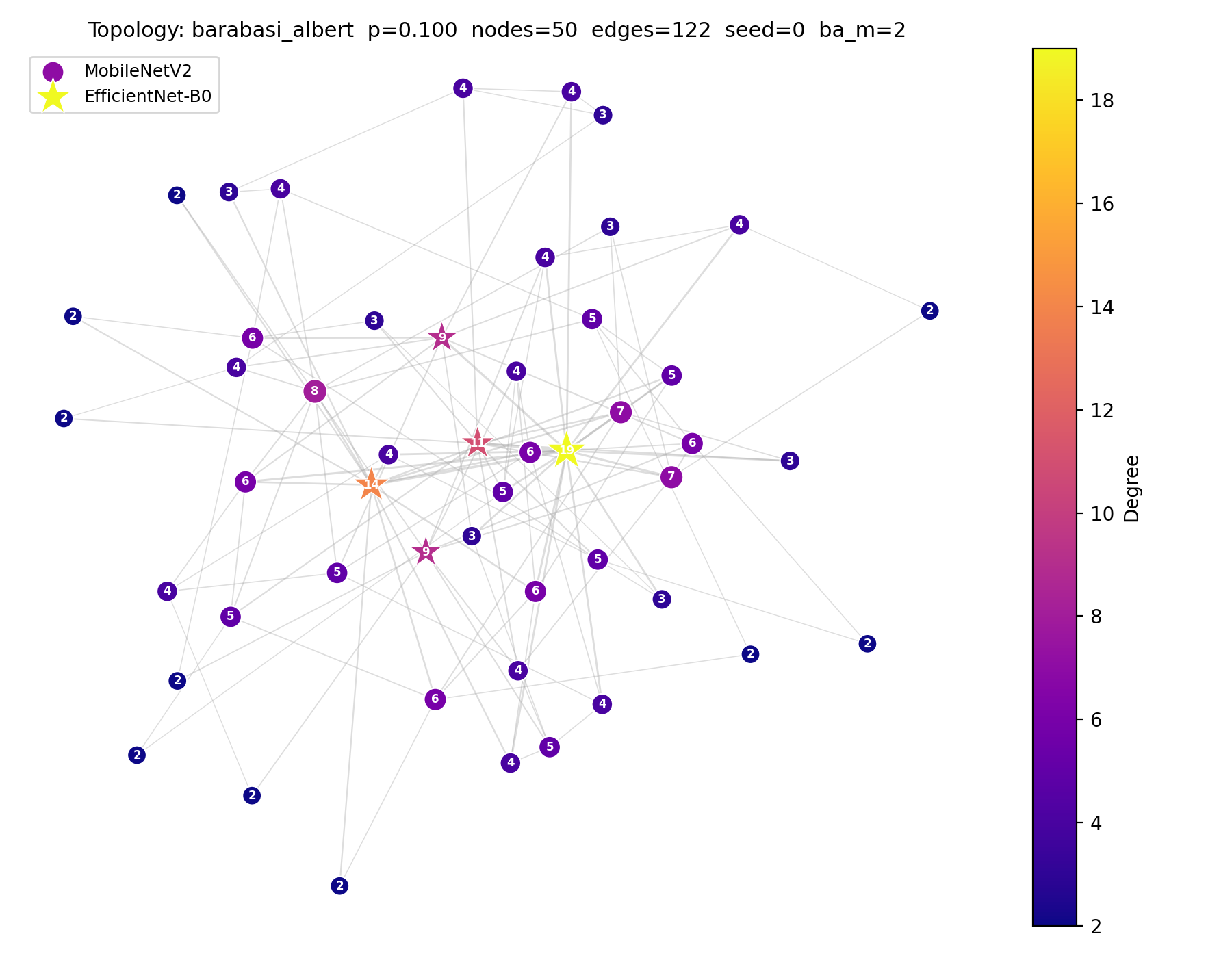}
\caption{Sparse heterogeneous topology (BA, $m{=}2$, $p{=}0.1$,
$n{=}50$, seed~0; 122 edges). Node color encodes degree;
EfficientNet-B0 hubs are stars, MobileNetV2 nodes are circles.}
\label{fig:ba_topology}
\end{figure}

\paragraph{Dense homogeneous topology (CIFAR).}
Figure~\ref{fig:uniform_topology} shows the uniform random graph
($p{=}0.5$, 612 edges) used in the dense homogeneous setting
(Table~\ref{tab:dense_homo}). All nodes run MobileNetV2. The
near-uniform degree distribution (min 18, max 30) and dense
connectivity are favorable to parameter-sharing baselines.

\begin{figure}[htbp]
\centering
\includegraphics[width=0.85\linewidth]{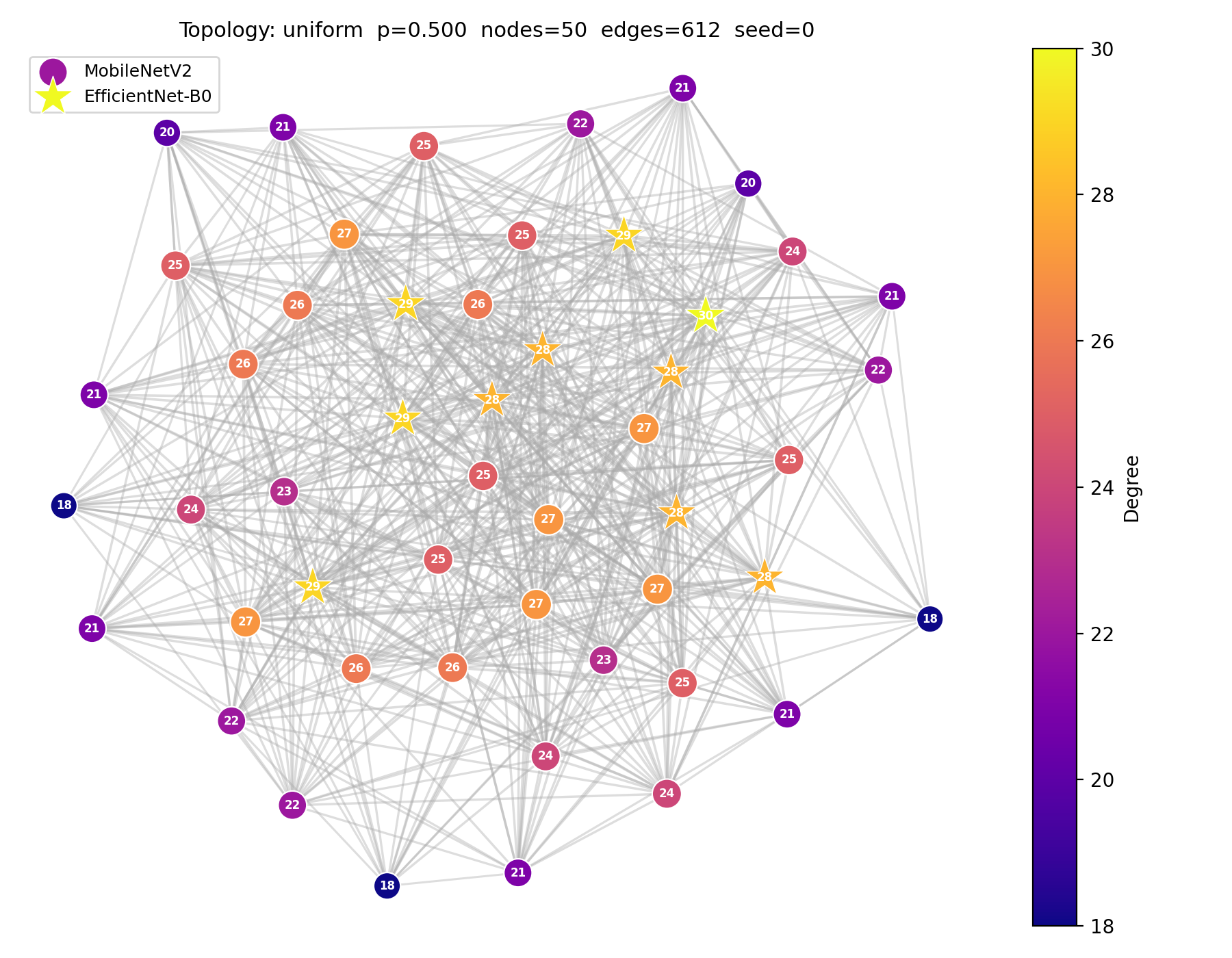}
\caption{Dense homogeneous topology (uniform random graph,
$p{=}0.5$, $n{=}50$, seed~0; 612 edges). All nodes run
MobileNetV2. The near-uniform degree distribution (min 18, max 30)
and dense connectivity favor parameter-sharing baselines.}
\label{fig:uniform_topology}
\end{figure}

\end{document}